\newcommand{\argmin}{\mathop{\mathrm{argmin}}}
\newcommand{\argmax}{\mathop{\mathrm{argmax}}}
\def\R{\mathbb{R}}
\def\E{\mathbb{E}}
\def\1{\mathbbm{1}}
\def\half{\frac{1}{2}}
\def\tr{\mathrm{tr}}
\def\cA{\mathcal{A}}
\def\cF{\mathcal{F}}
\def\cH{\mathcal{H}}
\def\cU{\mathcal{U}}
\def\cW{\mathcal{W}}
\def\cX{\mathcal{X}}
\def\cY{\mathcal{Y}}
\theoremstyle{plain}
\newtheorem{theorem}{Theorem}[section]
\newtheorem{lemma}[theorem]{Lemma}
\theoremstyle{definition}
\newtheorem{assumption}[theorem]{Assumption}
\theoremstyle{remark}
\newtheorem{remark}[theorem]{Remark}
\icmltitlerunning{Global Optimization with Parametric Function Approximation}
\begin{document}

\twocolumn[
\icmltitle{Global Optimization with Parametric Function Approximation}

\icmlsetsymbol{equal}{*}

\begin{icmlauthorlist}
\icmlauthor{Chong Liu}{ucsb}
\icmlauthor{Yu-Xiang Wang}{ucsb}
\end{icmlauthorlist}

\icmlaffiliation{ucsb}{Department of Computer Science, University of California, Santa Barbara, CA 93106, USA}

\icmlcorrespondingauthor{Chong Liu}{chongliu@cs.ucsb.edu}
\icmlcorrespondingauthor{Yu-Xiang Wang}{yuxiangw@cs.ucsb.edu}

\icmlkeywords{Global optimization, parametric function approximation, bandit}

\vskip 0.3in
]

\printAffiliationsAndNotice{}

\begin{abstract}
We consider the problem of global optimization with noisy zeroth order oracles — a well-motivated problem useful for various applications ranging from hyper-parameter tuning for deep learning to new material design. Existing work relies on Gaussian processes or other non-parametric family, which suffers from the curse of dimensionality. In this paper, we propose a new algorithm GO-UCB that leverages a parametric family of functions (e.g., neural networks) instead. Under a realizable assumption and a few other mild geometric conditions, we show that GO-UCB achieves a cumulative regret of $\tilde{O}(\sqrt{T})$ where $T$ is the time horizon. At the core of GO-UCB is a carefully designed uncertainty set over parameters based on gradients that allows optimistic exploration. Synthetic and real-world experiments illustrate GO-UCB works better than popular Bayesian optimization approaches, even if the model is misspecified.
\end{abstract}

\section{Introduction}\label{sec:intro}
We consider the problem of finding a global optimal solution to the following optimization problem 
\begin{align*}
\max_{x \in \cX} f(x),
\end{align*} 
where $f: \cX \rightarrow \mathbb{R}$ is an unknown non-convex function that is not necessarily differentiable in $x$.

This problem is well-motivated by many real-world applications. 
For example, the accuracy of a trained neural network on a validation set is complex non-convex function of a set of hyper-parameters (e.g., learning rate, momentum, weight decay, dropout, depth, width, choice of activation functions ...) that one needs to maximize \citep{kandasamy2020tuning}.  
Also in material design, researchers want to synthesize ceramic materials, e.g., titanium dioxide ($\mathrm{TiO}_2$) thin films, using microwave radiation \citep{nakamura2017design} where the film property is a non-convex function of parameters including temperature, solution concentration, pressure, and processing time. Efficiently solving such non-convex optimization problems could significantly reduce energy cost.

We assume having access to only noisy function evaluations, i.e., at round $t$, we select a point $x_t$ and receive a noisy function value $y_t$,
\begin{align}
y_t = f(x_t) + \eta_t,\label{eq:y}
\end{align}
where $\eta_t$ for $t=1,...,T$ are \emph{independent}, \emph{zero-mean}, $\sigma$-\emph{sub-Gaussian} noise. This is known as the \emph{noisy zeroth-order oracle} setting in optimization literature. Let $f^*$ be the optimal function value, following the tradition of Bayesian optimization (see e.g., \citet{frazier2018tutorial} for a review), throughout this paper, we use \emph{cumulative regret} as the evaluation criterion, defined as 
\begin{align*}
R_T = \sum_{t=1}^T r_t = \sum_{t=1}^T f^* - f(x_t),
\end{align*}
where $r_t$ is called instantaneous regret at round $t$. 
An algorithm $\cA$ is said to be a no-regret algorithm if $\lim_{T \rightarrow \infty} R_T(\cA)/T=0$.

Generally speaking, solving a global non-convex optimization is NP-hard \citep{jain2017non} and we need additional assumptions to efficiently proceed. Bayesian optimization usually assumes the objective function $f$ is drawn from a Gaussian process prior.
\citet{srinivas2010gaussian} proposed the GP-UCB approach, which iteratively queries the argmax of an upper confidence bound of the current posterior belief, before updating the posterior belief using the new data point. However, Gaussian process relies on kernels, e.g., squared error kernel or Mat\'ern kernel, which suffer from the curse of dimensionality. A folklore rule-of-thumb is that GP-UCB becomes unwieldy when the dimension is larger than $10$.

A naive approach is to passively query $T$ data points uniformly at random, estimate $f$ by $\hat{f}$ using supervised learning, then return the maximizer of the plug-in estimator $\hat{x} =\argmax_{x \in \cX}\hat{f}(x)$. This may side-step the curse-of-dimensionality depending on which supervised learning model we use.
The drawback of this passive query model is that it does not consider the structure of the function nor does it quickly ``zoom-in'' to the region of the space that is nearly optimal. In contrast, an active query model allows the algorithm to iteratively interact with the function. At round $t$, the model collects information from all previous rounds $1,...,t-1$ and decides where to query next. 

\textbf{GO-UCB Algorithm.} In this paper, we develop an algorithm that allows Bayesian optimization-style active queries to work for general supervised learning-based function approximation. We assume that the supervised learning model $f_w:\cX \rightarrow \R$ is differentiable w.r.t. its $d_w$-dimensional parameter vector $w\in \cW\subset \R^{d_w}$ and that the function class $\cF=\{f_w | w\in\cW\}$ is flexible enough such that the true objective function $f = f_{w^*}$ for some $w^*\in\cW$, i.e., $\cF$ is  \emph{realizable}.  Our algorithm --- \emph{Global Optimization via Upper Confidence Bound} (GO-UCB) --- has two phases:

\fbox{\parbox{0.97\columnwidth}{The \textit{GO-UCB} Framework:
\small
\noindent
\begin{itemize}
    \item Phase I: Uniformly explore $n$ data points.
    \item Phase II: Optimistically explore $T$ data points.
    \vspace{-1em}
\end{itemize}}}

The goal of Phase I to sufficiently explore the function and make sure the estimated parameter $\hat{w}_0$ is close enough to true parameter $w^*$ such that exploration in Phase II are efficient. To solve the estimation problem, we rely on a regression oracle that is able to return an estimated $\hat{w}_0$ after $n$ observations. In details, after Phase I we have a dataset $\{(x_j,y_j)\}_{j=1}^{n}$, then
\begin{align}
\hat{w}_0 \leftarrow \argmin_{w\in\cW} \sum_{j=1}^n (f_w(x_j) - y_j)^2.\label{eq:regression_oracle}
\end{align}
This problem is known as a \emph{non-linear least square} problem. It is computationally hard in the worst-case, but many algorithms are known (e.g., SGD, Gauss-Newton, Levenberg-Marquardt) to effectively solve this problem in practice. Our theoretical analysis of $\hat{w}_0$ uses techniques from \citet{nowak2007complexity}. See Section \ref{sec:mle} for details.

In Phase II, exploration is conducted following the principle of ``Optimism in the Face of Uncertainty'', i.e., the parameter is optimized within an uncertainty region that always contains the true parameter $w^*$. Existing work in bandit algorithms provides techniques that work when $f_w$ is a linear function \citep{abbasi2011improved} or a generalized linear function \citep{li2017provably}, but no solution to general differentiable function is known. 
At the core of our GO-UCB is a carefully designed uncertainty ball $\mathrm{Ball}_t$ over parameters based on gradients, which allows techniques from the linear bandit \citep{abbasi2011improved} to be adapted for the non-linear case. In detail, the ball is defined to be centered at $\hat{w}_t$ --- the solution to a regularized online regression problem after $t-1$ rounds of observations. And the radius of the ball is measured by the covariance matrix of the gradient vectors of all previous rounds. We prove that $w^*$ is always trapped within the ball with high probability.

\textbf{Contributions.} In summary, our main contributions are:
\begin{enumerate}
\item We initiate the study of global optimization problem with parametric function approximation and proposed a new optimistic exploration algorithm --- GO-UCB.
\item Assuming \emph{realizability} and other mild geometric conditions, we prove that GO-UCB converges to the global optima with cumulative regret at the order of $\tilde{O}(\sqrt{T})$ where $T$ is the time horizon.
\item GO-UCB does not suffer from the curse of dimensionality like Gaussian processes-based Bayesian optimization methods. The unknown objective function $f$ can be high-dimensional, non-convex, non-differentiable, and even discontinuous in its input domain. 
\item Synthetic test function and real-world hyperparameter tuning experiments show that GO-UCB works better than all compared Bayesian optimization methods in both realizable and misspecified settings.
\end{enumerate}

\textbf{Technical novelties.} The design of GO-UCB algorithm builds upon the work of \citet{abbasi2011improved} and \citet{agarwal2021rl}, but requires substantial technical novelties as we handle a generic nonlinear parametric function approximation. Specifically:
\begin{enumerate}
    \item LinUCB analysis (e.g., self-normalized Martingale concentration, elliptical potential lemmas \citep{abbasi2011improved,agarwal2021rl}) is not applicable for nonlinear function approximation, but we showed that they can be adapted for this purpose if we can \emph{localize} the learner to a neighborhood of $w^*$. 
    \item We identify a new set of structural assumptions under which we can localize the learner sufficiently with only $O(\sqrt{T})$ rounds of pure exploration.
    \item Showing that $w^*$ remains inside the parameter uncertainty ball $\mathrm{Ball}_t, \forall t \in [T]$ is challenging. We solve this problem by setting regularization centered at the initialization parameter $\hat{w}_0$ and presenting novel inductive proof of a lemma showing $\forall t\in [T], \hat{w}_t$ converges to $w^*$ in $\ell_2$-distance at the same rate.
\end{enumerate}
These new techniques could be of independent interest.

\section{Related Work}\label{sec:rw}
Global non-convex optimization is an important problem that can be found in a lot of research communities and real-world applications, e.g., optimization \citep{rinnooy1987stochastic1,rinnooy1987stochastic2}, machine learning \citep{bubeck2011x,malherbe2017global}, hyperparameter tuning \citep{hazan2018hyperparameter}, neural architecture search \citep{kandasamy2018neural,wang2020learning}, and material discovery \citep{frazier2016bayesian}.

One of the most prominent approaches to this problem is Bayesian Optimization (BO) \citep{shahriari2015taking}, in which the objective function is usually modeled by a Gaussian Process (GP) \citep{williams2006gaussian}, so that the uncertainty can be updated under the Bayesian formalism. Among the many notable algorithms in GP-based BO \citep{srinivas2010gaussian,jones1998efficient,bull2011convergence,frazier2009knowledge,agrawal2013thompson,cai2021on},
GP-UCB \citep{srinivas2010gaussian} is the closest to our paper because our algorithm also selects data points in a UCB (upper confidence bound) style but the construction of the UCB in our paper is different since we are not working with GPs. \citet{scarlett2017lower} proves lower bounds on regret for noisy Gaussian process bandit optimization. GPs are highly flexible and can approximate any smooth functions, but such flexibility comes at a price to play --- curse of dimensionality. Most BO algorithms do not work well when $d>10$. Notable exceptions include the work of
\citet{shekhar2018gaussian,calandriello2019gaussian,eriksson2019scalable,salgia2021domain,rando2022ada} who designed more specialized BO algorithms for high-dimensional tasks. 

Besides BO with GPs, other nonparametric families were considered for global optimization tasks, but they, too, suffer from the curse of dimensionality. We refer readers to 
\citet{wang2018optimization} and the references therein.

While most BO methods use GP as surrogate models, there are other BO methods that use alternative function classes such as neural networks \citep{snoek2015scalable,springenberg2016bayesian}. These methods are different from us in that they use different ways to fit the neural networks and a Monte Carlo sampling approach to decide where to explore next. Empirically, it was reported that they do not outperform advanced GP-based methods that use trust regions \citep{eriksson2019scalable}.

Our problem is also connected to the bandits literature \citep{li2019nearly,foster2020beyond,russo2013eluder,filippi2010parametric}. The global optimization problem can be written as a nonlinear bandits problem in which queried points are actions and the function evaluations are rewards. However, no bandits algorithms can simultaneously handle an infinite action space and a generic nonlinear reward function. Here ``generic'' means the reward function is much more general than a linear or generalized linear function \citep{filippi2010parametric}. 
To the best of our knowledge, we are the first to address the infinite-armed bandit problems with a general differentiable value function (albeit with some additional assumptions).

A recent line of work studied bandits and global optimization with neural function approximation \citep{zhou2020neural,zhang2020neural,dai2022sample}. The main difference from us is that these results still rely on Gaussian processes with a Neural Tangent Kernel in their analysis, thus intrinsically linear. Their regret bounds also require the width of the neural network to be much larger than the number of samples to be sublinear. In contrast, our results apply to general nonlinear function approximations and do not require overparameterization.

\section{Preliminaries}\label{sec:pre}

\subsection{Notations}
We use $[n]$ to denote the set $\{1,2,...,n\}$. The algorithm queries $n$ points in Phase I and $T$ points in Phase II. Let $\cX \subset \mathbb{R}^{d_x}$ and $\cY \subset \mathbb{R}$ denote the domain and  range of $f$, and $\cW \subset [0, 1]^{d_w}$ denote the parameter space of a family of functions $\cF := \{f_w: \cX\rightarrow \cY |  w\in \cW\}$. For convenience, we denote the bivariate function $f_w(x)$ by $f_x(w)$ when $w$ is the variable of interest. $\nabla f_x(w)$ and $\nabla^2 f_x(w)$ denote the gradient and Hessian of function $f$ w.r.t. $w$. $L(w) := \E_{x \sim \cU} (f_x(w)-f_x(w^*))^2$ denotes the (expected) risk function where $\cU$ is uniform distribution.
For a vector $x$, its $\ell_p$ norm is denoted by $\|x\|_p = (\sum_{i=1}^d |x_i|^p)^{1/p}$ for $1\leq p < \infty$ and its $\ell_\infty$ norm is denoted by $\|x\|_\infty = \max_{i \in [d_x]} |x_i|$. 
For a matrix $A$, its operator norm is denoted by $\|A\|_\mathrm{op}$. 
For a vector $x$ and a square matrix $A$, define $\|x\|^2_A = x^\top A x$. Throughout this paper, we use standard big $O$ notation that hide universal constants; and to improve the readability, we use $\tilde{O}$ to hide all logarithmic factors as well as all polynomial factors in problem-specific parameters except $d_w, 1/\mu, T$. For reader's easy reference, we list all symbols and notations in Appendix \ref{sec:table}.

\subsection{Assumptions}
Here we list main assumptions that we will work with throughout this paper. The first assumption says that we have access to a differentiable function family that contains the unknown objective function.

\begin{assumption}[Realizability]\label{ass:parameter_class}
There exists $w^*\in \cW$ such that the unknown objective function $f= f_{w^*}$.
Also, assume $\cW\subset [0,1]^{d_w}$. This is w.l.o.g. for any compact $\cW$.
\end{assumption}
Realizable parameter class is a common assumption in literature \citep{chu2011contextual,foster2018practical,foster2020beyond}, usually the starting point of a line of research for a new problem because one doesn't need to worry about extra regret incurred by misspecified parameter. Although in this paper we only theoretically study the realizable parameter class, our GO-UCB algorithm empirically works well in misspecified tasks too. 

The second assumption is on properties of the function approximation.
\begin{assumption}[Bounded, differentiable and smooth function approximation]\label{ass:objective}
There exist constants $F, C_g, C_h > 0$ such that
$\forall x \in \cX, \forall w \in \cW$, it holds that $|f_x(w)|\leq F,$
\begin{align*}
\|\nabla f_x(w) \|_2 \leq C_g, \quad\text{ and }\quad
\|\nabla^2 f_x(w)\|_\mathrm{op} \leq C_h.
\end{align*} 
\end{assumption}
This assumption imposes mild regularity conditions on the smoothness of the function with respect to its parameter vector $w$. 

The third assumption is on the expected loss function over the uniform distribution (or any other exploration distribution) in the Phase I of GO-UCB.
\begin{assumption}[Geometric conditions on the loss function]\label{ass:loss}
$L(w)=\E_{x\sim \cU}(f_x(w)-f_x(w^*))^2$ satisfies $(\tau, \gamma)$-\emph{growth condition} or $\mu$-\emph{local strong convexity} at $w^*$, i.e., $\forall w \in \cW$,
\begin{align*}
\min\left\{\frac{\mu}{2}\|w-w^*\|_2^2,\frac{\tau}{2}\|w-w^*\|_2^\gamma \right\} \leq L(w)-L(w^*),
\end{align*}
for constants $\mu,\tau >0,\mu < d_w$ and $0<\gamma<2$.
Also, $L(w)$ satisfies a $c$-\emph{local self-concordance} assumption at $w^*$, i.e., for all $w$ s.t. $\|w-w^*\|_{\nabla^2L(w^*)}\leq c$,
$$
(1-c)^2\cdot \nabla^2L(w^*) \preceq \nabla^2L(w)\preceq (1-c)^{-2} \cdot \nabla^2L(w^*).
$$
We also assume $c \leq 0.5$ for convenience. This is without loss of generality because if the condition holds for $c>0.5$, then the condition for $c\leq 0.5$ is automatically satisfied.
\end{assumption}

\begin{figure}[t]
	\centering
	\begin{minipage}{0.99\linewidth}\centering
		\includegraphics[width=\textwidth]{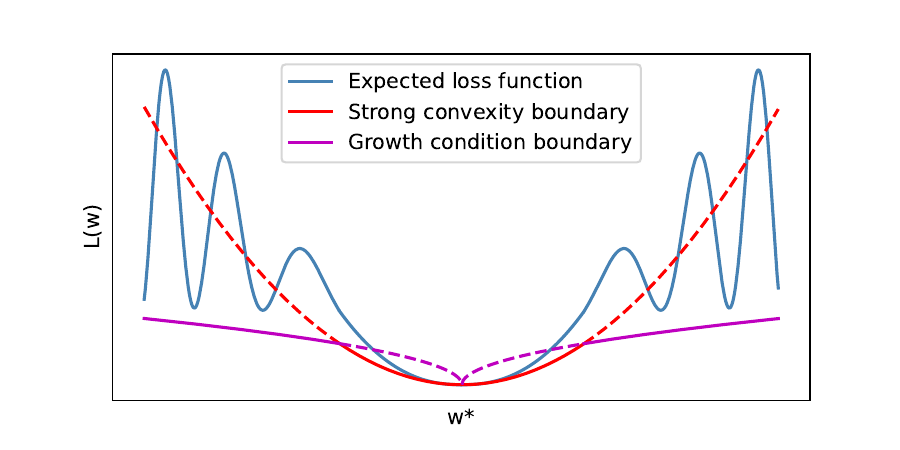}
	\end{minipage}
\caption{Example of a highly non-convex $L(w)$ satisfying Assumption \ref{ass:loss}. Solid lines denote the actual lower bound by taking $\min$ over strong convexity and growth condition. $L(w)$ is strongly convex near $w^*$ but can be highly non-convex away from $w^*$.
}\label{fig:loss}
\end{figure}

This assumption has three main components: (global) growth condition,  local strong convexity, and local self-concordance. 

The global growth condition says that $f_w$ with parameters far away from $w^*$ cannot approximate $f$ well over the distribution $\cU$. The local strong convexity assumption requires the local neighborhood near $w^*$ to have quadratic growth.  

These two conditions are strictly weaker than global strong convexity because it does not require convexity except in a local neighborhood near the global optimal $w^*$, i.e., $\{w|\|w - w^*\|_2 \leq (\tau/\mu)^\frac{1}{2-\gamma}\}$ and it does not limit the number of spurious local minima, as the global $\gamma$-growth condition only gives a mild lower bound as $w$ moves away from $w^*$. See Figure \ref{fig:loss} for an example. Our results work even if $\gamma$ is a small constant $<1$.

Self-concordance originates from a clean analysis of Newton's method \citep{nesterov1994interior}. See Example 4 of \citet{zhang2017improved} for a list of examples satisfying self-concordance. A localized version of self-concordance is needed in our problem for technical reasons, but again it is only required within a small ball of radius $c$ near $w^*$ for the expected loss under $\cU$.
Our results work even if $c$ vanishes at $O(T^{-1/4})$.

To avoid any confusion, the three assumptions we made above are only about the expected loss function w.r.t. uniform distribution $\cU$ as a function of $w$, rather than objective function $f_{w^*}(x)$. The problem to be optimized can still be arbitrarily complex in terms of $\cX$, e.g., high-dimensional and non-continuous functions. As an example, in Gaussian process-based Bayesian optimization approaches, $f_{w^*}(x)$ belongs to a reproducing kernel Hilbert space, but its loss function is globally convex in its ``infinite dimensional'' parameter $w$. Also, we no longer need this assumption in Phase II. 

\textbf{Additional notations.} For convenience, we define $\zeta >0$ such that $\|\nabla^2 L(w^*)\|_\mathrm{op} \leq  \zeta$.
The existence of a finite $\zeta$ is implied by Assumption~\ref{ass:objective} and it suffices to take $\zeta = 2C_g^2$ because $\nabla^2 L(w^*) = \E_{x\sim \cU}[ 2\nabla f_x(w^*) \nabla f_x(w^*)^\top]$.

\section{Main Results}\label{sec:main}
In Section \ref{sec:alg}, we state our Global Optimization with Upper Confidence Bound (GO-UCB) algorithm and explain key design points of it. Then in Section \ref{sec:regret_bound}, we prove that its cumulative regret bound is at the rate of $\tilde{O}(\sqrt{T})$.

\subsection{Algorithm}\label{sec:alg}
Our GO-UCB algorithm, shown in Algorithm \ref{alg:go_ucb}, has two phases. Phase I does uniform exploration in $n$ rounds and Phase II does optimistic exploration in $T$ rounds. In Step 1 of Phase I, $n$ is chosen to be large enough such that the objective function can be sufficiently explored. Step 2-3 are doing uniform sampling. In Step 5, we call regression oracle to estimate $\hat{w}_0$ given all observations in Phase I as in eq. \eqref{eq:regression_oracle}.
Adapted from \citet{nowak2007complexity}, we prove the convergence rate of $\|\hat{w}_0 - w^*\|_2$ is at the rate of $\tilde{O}(1/\sqrt{n})$. See Theorem \ref{thm:mle_guarantee} for details.

\begin{algorithm}[!b]
\caption{GO-UCB}
	\label{alg:go_ucb}
	{\bf Input:}
	Time horizon $T$, uniform exploration phase length $n$, uniform distribution $\cU$, regression oracle $\mathrm{Oracle}$, regularization weight $\lambda$,
 confidence sequence $\beta_t$ for $t=1,2,...,T$.\\
	{\bf Phase I} (Uniform exploration)
	\begin{algorithmic}[1]
	    \FOR{$j = 1,...,n$}
	    \STATE Sample $x_j \sim \cU(\cX)$.
	    \STATE Observe $y_j = f(x_j) + \eta_j$.
		\ENDFOR
		\STATE Estimate $\hat{w}_0 \leftarrow \mathrm{Oracle}(x_1, y_1, ..., x_n, y_n)$.
	\end{algorithmic}
	{\bf Phase II} (Optimistic exploration)
	\begin{algorithmic}[1]
	    \FOR{$t = 1,...,T$}
	    \STATE Update $\Sigma_t$ by eq. \eqref{eq:sigma_t} with the input $\lambda$.
	    \STATE Update $\hat{w}_t$ by eq. \eqref{eq:inner}  with the input $\lambda$.
	    \STATE Update $\mathrm{Ball}_t$ by eq. \eqref{eq:ball} with the input $\beta_t$.
	    \STATE Select $x_t=\argmax_{x \in \cX} \max_{w \in \mathrm{Ball}_t} f_x(w)$.
	    \STATE Observe $y_t = f(x_t) + \eta_t$.
		\ENDFOR
	\end{algorithmic}
	{\bf Output:}
        $\hat{x} \sim \cU (\{x_1, ..., x_T\})$.
\end{algorithm}

The key challenge of Phase II of GO-UCB is to design an acquisition function to select $x_t, \forall t \in [T]$. Since we are using parametric function to approximate the objective function, we heavily rely on a feasible parameter uncertainty region $\mathrm{Ball}_t, \forall t \in [T]$, which should always contain the true parameter $w^*$ throughout the process. The shape of $\mathrm{Ball}_t$ is measured by the covariance matrix $\Sigma_t$, defined as
\begin{align}
\Sigma_t = \lambda I + \sum_{i=0}^{t-1} \nabla f_{x_i}(\hat{w}_i) \nabla f_{x_i}(\hat{w}_i)^\top.\label{eq:sigma_t}
\end{align}
Note $i$ is indexing over both $x$ and $w$, which means that as time $t$ goes from $0$ to $T$, the update to $\Sigma_t$ is always rank one. It allows us to bound the change of $\Sigma_t$ from $t=0$ to $T$.

$\mathrm{Ball}_t$ is centered at $\hat{w}_t$, the newly estimated parameter at round $t$. In Step 2, we update the estimated $\hat{w}_t$ by solving the following optimization problem:
\begin{align}
\hat{w}_t &= \argmin_{w}\frac{\lambda}{2} \|w - \hat{w}_0\|_2^2\nonumber \\
&+ \half \sum_{i=0}^{t-1} ((w-\hat{w}_i)^\top \nabla f_{x_i}(\hat{w}_i) + f_{x_i}(\hat{w}_i) - y_i)^2.\label{eq:opt_inner}
\end{align}
The optimization problem is an online regularized least square problem involving gradients from all previous rounds, i.e., $\nabla f_{x_i}(\hat{w}_i), \forall i \in [T]$. The intuition behind it is that we use gradients to approximate the function since we are dealing with generic objective function. We set the regularization w.r.t. $\hat{w}_0$ rather than $0$ because from regression oracle we know how close is $\hat{w}_0$ to $w^*$. By setting the gradient of objective function in eq. \eqref{eq:opt_inner} to be $0$, the closed form solution of $\hat{w}_t$ is 
\begin{align}
\hat{w}_t &= \Sigma^{-1}_t \left(\sum_{i=0}^{t-1} \nabla f_{x_i}(\hat{w}_i) (\nabla f_{x_i}(\hat{w}_i)^\top \hat{w}_i +y_i - f_{x_i}(\hat{w}_i)) \right) \nonumber\\
&\qquad + \lambda \Sigma^{-1}_t \hat{w}_0.\label{eq:inner}
\end{align}

Now we move to our definition of $\mathrm{Ball}_t$, shown as
\begin{align}
\mathrm{Ball}_t = \{w: \|w-\hat{w}_t\|^2_{\Sigma_t} \leq \beta_t\},\label{eq:ball}
\end{align}
where $\beta_t$ is a pre-defined monotonically increasing sequence that we will specify later. 
Following the ``optimism in the face of uncertainty'' idea, our ball is centered at $\hat{w}_t$ with $\beta_t$ being the radius and $\Sigma_t$ measuring the shape. $\beta_t$ ensures that the true parameter $w^*$  is always contained in $\mathrm{Ball}_t$  w.h.p. 
In Section \ref{sec:ball}, we will show that it suffices to choose 
\begin{align}
\beta_t &= \tilde{O}\left(d_w \sigma^2 + \frac{d^3_w}{\mu^2} 
+ \frac{d^3_w t}{\mu^2 T}\right),\label{eq:beta_t}
\end{align}
where $\tilde{O}$ hides logarithmic terms in $t, T$ and $1/\delta$ (w.p. $1-\delta$). 

Then in Step 5 of Phase II, $x_t$ is selected by joint optimization over $x \in \cX$ and $w \in \mathrm{Ball}_t$. Finally, we collect all observations in $T$ rounds and output $\hat{x}$ by uniformly sampling over $\{x_1,...,x_T\}$.

\subsection{Regret Upper Bound}\label{sec:regret_bound}
Now we present the cumulative regret upper bound of GO-UCB algorithm.

\begin{theorem}[Cumulative regret of GO-UCB]\label{thm:cr}
Suppose Assumption \ref{ass:parameter_class}, \ref{ass:objective}, \& \ref{ass:loss} hold with parameters $F,C_g,C_h,\zeta,\mu,\gamma,\tau,c$. Assume
\begin{equation}\label{eq:n_lower_main}
T > C d_w^2 F^4\iota^2 \cdot \max \left\{   \frac{\mu^{\gamma/(2-\gamma)}}{\tau^{2/(2-\gamma)}}, \frac{\zeta}{\mu c^2} \right\}^2,  
\end{equation}
where $C$ is a universal constant and $\iota$ is a logarithmic term depending on $n, C_h, 2/\delta$ (both of them from Theorem~\ref{thm:mle_guarantee}).  Then Algorithm~\ref{alg:go_ucb} with parameters $n = \sqrt{T}$, $\lambda = C_\lambda \sqrt{T}$ (for a $C_\lambda$ logarithmically dependent to $T$ and polynomial in all other parameters) and $\beta_{1:T}$ as in eq. \eqref{eq:beta_t}
obeys that with probability  at least $1-\delta$,
\begin{align*}
R_{\sqrt{T}+T} &= \tilde{O}\left(\sqrt{T} F + \sqrt{T\beta_T d_w  + \frac{T\beta^2_T}{\lambda^2}}\right)\\
& = \tilde{O}\left(\frac{d^2_w \sqrt{T}}{\mu}\right).
\end{align*}
\end{theorem}

Let us highlight a few interesting aspects of the result.
\begin{remark}
Without Gaussian process assumption, we propose the first algorithm to solve global optimization problem with $\tilde{O}(\sqrt{T})$ cumulative regret, which is \emph{dimension-free} in terms of its input domain $\cX$. GO-UCB is a no-regret algorithm since $\lim_{T \rightarrow \infty} R_T/T =0$, and the output $\hat{x}$ satisfies that 
$f^* - \E[ f(\hat{x})] \leq \tilde{O}(1/\sqrt{T})$, which is also knowns as expected simple regret upper bound. The dependence in $T$ is optimal up to logarithmic factors, as it matches the lower bound for linear bandits \citep[Theorem 3]{dani2008stochastic}. 
\end{remark}
\begin{remark}[Choice of $\lambda$]
One important deviation from the classical linear bandit analysis is that we require a regularization that centers around $\hat{w}_0$ and the regularization weight $\lambda$ to be $C_\lambda \sqrt{T}$, comparing to $\lambda = O(1)$ in the linear case. The choice is to ensure that $\hat{w}_t$ stays within the local neighborhood of $\hat{w}_0$, and to delicately balance different terms that appear in the regret analysis to ensure that the overall regret bound is $\tilde{O}(\sqrt{T})$.
\end{remark}
\begin{remark}[Choice of $n$]
We choose $n =\sqrt{T}$, therefore, it puts sample complexity requirement on $T$ shown in eq. \eqref{eq:n_lower_main}. The choice of $n$ plays two roles here. First, it guarantees that the regression result $\hat{w}_0$ lies in the neighboring region of $w^*$ of the loss function $L(w)$ with high probability. The neighboring region of $w^*$ has nice properties, e.g., local strong convexity, which allow us to build the upper bound of $\ell_2$-distance between $\hat{w}_0$ and $w^*$. Second, in Phase I, we are doing uniform sampling over the function so the cumulative regret in Phase I is bounded by $2Fn=2F\sqrt{T}$ which is at the same $\tilde{O}(\sqrt{T})$ rate as that in Phase II.
\end{remark}

\section{Proof Overview}\label{sec:proof}
In this section, we give a proof sketch of all theoretical results. A key insight of our analysis is that there is more mileage that seminal techniques developed by \citet{abbasi2011improved} for analyzing linearly parameterized bandits problems in analyzing non-linear bandits, though we need to localize to a nearly optimal region and carefully handle the non-linear components via more aggressive regularization. Other assumptions that give rise to a similarly good initialization may work too and our new proof can be of independent interest in analyzing other extensions of LinUCB, e.g., to contextual bandits,  reinforcement learning and other problems.

In detail, first we prove the estimation error bound of $\hat{w}_0$ for Phase I of GO-UCB algorithm, then 
prove the feasibility of $\mathrm{Ball}_t$. Finally by putting everything together we prove the cumulative regret bound of GO-UCB algorithm. Due to page limit, we list all auxiliary lemmas in Appendix \ref{sec:auxiliary} and show complete proofs in Appendix \ref{sec:miss}.

\subsection{Regression Oracle Guarantee}\label{sec:mle}
The goal of Phase I of GO-UCB is to sufficiently explore the unknown objective function with $n$ uniform queries and obtain an estimated parameter $\hat{w}_0$. By assuming access to a regression oracle, we prove the convergence bound of $\hat{w}_0$ w.r.t. $w^*$, i.e., $\|\hat{w}_0-w^*\|^2_2$. To get started, we need the following regression oracle lemma.

\begin{lemma}[Adapted from \citet{nowak2007complexity})]\label{lem:mle_oracle}
Suppose Assumption \ref{ass:parameter_class} \& \ref{ass:objective} hold. There is an absolute constant $C'$, such that after round $n$ in Phase I of Algorithm \ref{alg:go_ucb}, with probability $>1 - \delta/2$, regression oracle estimated $\hat{w}_0$ satisfies
\begin{align*}
\E_{x \sim \cU} [(f_x(\hat{w}_0) - f_x(w^*))^2] \leq \frac{C' d_w F^2 \iota}{n},
\end{align*}
where $\iota$ is the logarithmic term depending on $n, C_h, 2/\delta$.
\end{lemma}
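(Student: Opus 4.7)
The plan is to analyze the empirical risk minimizer $\hat w_0$ for the nonlinear least-squares problem \eqref{eq:regression_oracle} via the complexity-regularized ERM framework of \citet{nowak2007complexity}. The three ingredients I need are (i) realizability $f = f_{w^*}$ (Assumption~\ref{ass:parameter_class}); (ii) uniform boundedness $|f_x(w)| \le F$ together with $\sigma$-sub-Gaussianity of $\eta_j$; and (iii) an upper bound on the metric entropy of $\cF = \{f_w : w \in \cW\}$.

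First I would bound the covering number of $\cF$. Because $\|\nabla f_x(w)\|_2 \le C_g$ uniformly in $x$ (Assumption~\ref{ass:objective}), the parameterization $w \mapsto f_x(w)$ is $C_g$-Lipschitz, so any Euclidean $\epsilon$-cover of $\cW \subset [0,1]^{d_w}$---of size at most $(3/\epsilon)^{d_w}$---induces a sup-norm $C_g\epsilon$-cover of $\cF$, giving $\log \cN(\cF, C_g\epsilon, \|\cdot\|_\infty) \le d_w \log(3/\epsilon)$. A second-order Taylor refinement around the nearest cover element, controlled by the Hessian bound $\|\nabla^2 f_x(w)\|_{\mathrm{op}} \le C_h$, is what lets the logarithmic factor $\iota$ in the final bound depend on $C_h$ rather than on larger quantities.

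Next I would invoke a standard ERM oracle inequality for bounded nonlinear regression with sub-Gaussian noise, such as the chaining/peeling argument underlying \citet{nowak2007complexity}. Under realizability, this yields, with probability at least $1-\delta/2$,
\begin{equation*}
\E_{x\sim\cU}\bigl[(f_x(\hat w_0) - f_x(w^*))^2\bigr] \le c \cdot \frac{(F^2+\sigma^2)\bigl(\log \cN(\cF, 1/n, \|\cdot\|_\infty) + \log(2/\delta)\bigr)}{n}
\end{equation*}
for some universal constant $c$. Substituting the covering estimate above and absorbing $\sigma^2$ into the constant (since the sub-Gaussian scale is fixed) gives the claimed rate $C' d_w F^2 \iota / n$ with $\iota = O\bigl(\log(nC_h) + \log(2/\delta)\bigr)$.

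The main obstacle is bridging the sub-Gaussian-noise case to the bounded-noise analysis that the classical complexity-regularized ERM bounds assume: the cleanest route is a truncation argument (truncating $\eta_j$ at level $O(\sigma\sqrt{\log(n/\delta)})$ loses only a logarithmic factor via a union bound) followed by a standard peeling/chaining argument on the truncated, bounded problem. A secondary subtlety is tracking constants carefully enough to justify the precise dependence of $\iota$ on $C_h$---this comes from using the Hessian bound, rather than only the gradient bound, to refine the sup-norm discretization error inside the chaining step. Neither obstacle is conceptually novel: both amount to bookkeeping within a well-developed empirical-process machinery.
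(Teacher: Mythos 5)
Your plan is essentially the paper's: realizability kills the approximation-error term, boundedness by $F$ feeds the Nowak-style ERM oracle inequality, and a covering argument over $\cW\subset[0,1]^{d_w}$ supplies the $d_w\log(\cdot)$ complexity term, giving the $\tilde{O}(d_wF^2/n)$ rate. The execution differs in bookkeeping. The paper does not invoke a continuous-class chaining bound: it applies the finite-class inequality (its Lemma adapted from eq.~(5)--(6) of \citet{nowak2007complexity}, stated directly for the sub-Gaussian observation model, so no truncation step is needed) to an $\varepsilon$-net $\widetilde{\cW}$, obtaining a bound for the discretized ERM $\tilde{w}$, and then transfers to the oracle's output $\hat{w}_0$ explicitly via $\E[(f_x(\hat{w}_0)-f_x(w^*))^2]\le 2\E[(f_x(\hat{w}_0)-f_x(\tilde{w}))^2]+2\E[(f_x(\tilde{w})-f_x(w^*))^2]$, bounding the first term by the discretization error $\varepsilon^2C_h^2$ and choosing $\varepsilon=1/\sqrt{nC_h^2}$ so that it contributes only $2/n$. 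Your proposal leaves this transfer implicit inside the invoked oracle inequality, which is acceptable at the paper's level of rigor but is exactly the step where the cover-based guarantee (which a priori concerns the ERM over the net, not $\hat{w}_0$) must be connected to the actual estimator. Also, your claim that a ``second-order Taylor refinement'' is what makes $\iota$ depend on $C_h$ rather than $C_g$ does not hold up: a Taylor expansion around the nearest cover point still leaves a first-order term of size $C_g\varepsilon$, so it cannot eliminate $C_g$; in the paper the $C_h$ enters only through its (somewhat ad hoc) choice of discretization error constant and scale $\varepsilon=1/\sqrt{nC_h^2}$, and since both constants appear only inside a logarithm the discrepancy is immaterial to the stated rate. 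With the transfer step made explicit and the Taylor-refinement justification dropped (use the Lipschitz constant you already have and accept whichever constant lands in the log), your argument matches the paper's.
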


\citet{nowak2007complexity} proves that expected square error of Empirical Risk Minimization (ERM) estimator can be bounded at the rate of $\tilde{O}(1/n)$ with high probability, rather than $\tilde{O}(1/\sqrt{n})$ rate achieved by Chernoff/Hoeffding bounds. It works with realizable and misspecified settings. Proof of Lemma \ref{lem:mle_oracle} includes simplifying it with regression oracle, Assumption \ref{ass:parameter_class}, and $\varepsilon$-covering number argument over parameter class. Basically Lemma \ref{lem:mle_oracle} says that expected square error of $f_x(\hat{w}_0)$ converges to $f_x(w^*)$ at the rate of $\tilde{O}(1/n)$ with high probability. Based on it, we prove the following regression oracle guarantee.

\begin{theorem}[Regression oracle guarantee]\label{thm:mle_guarantee}
Suppose Assumption \ref{ass:parameter_class}, \ref{ass:objective}, \& \ref{ass:loss} hold. There is an absolute constant $C$ such that after round $n$ in Phase I of Algorithm \ref{alg:go_ucb} where $n$ satisfies
$n \geq C d_w F^2\iota \cdot \max \left\{   \frac{\mu^{\gamma/(2-\gamma)}}{\tau^{2/(2-\gamma)}}, \frac{\zeta}{\mu c^2} \right\}$,
with probability $> 1-\delta/2$, regression oracle estimated $\hat{w}_0$ satisfies
\begin{align*}
\|\hat{w}_0 - w^*\|^2_2 \leq \frac{C d_w F^2 \iota}{\mu n},
\end{align*}
where $\iota$ is the logarithmic term depending on $n, C_h, 2/\delta$.
\end{theorem}

Compared with Lemma \ref{lem:mle_oracle}, there is an extra sample complexity requirement on $n$ because we need $n$ to be sufficiently large such that the function can be sufficiently explored and more importantly $\hat{w}_0$ falls into the neighboring region (strongly convex region) of $w^*$. See Figure \ref{fig:loss} for illustration. It is also the reason why strong convexity parameter $\mu$ appears in the denominator of the upper bound. 

\subsection{Feasibility of $\mathrm{Ball}_t$}\label{sec:ball}
The following lemma is the key part of algorithm design of GO-UCB. It says that our definition of $\mathrm{Ball}_t$ is appropriate, i.e., throughout all rounds in Phase II, $w^*$ is contained in $\mathrm{Ball}_t$ with high probability.

\begin{lemma}[Feasibility of $\mathrm{Ball}_t$]\label{lem:feasible_ball}
Set $\Sigma_t, \hat{w}_t$ as in eq. \eqref{eq:sigma_t}, \eqref{eq:inner}. Set $\beta_t$ as
\begin{align}
\beta_t = \tilde{O} \left(d_w \sigma^2 + \frac{d^3_w}{\mu^2} + \frac{d^3_w t }{\mu^2 T}\right).\label{eq:beta_2}
\end{align}
Suppose Assumption \ref{ass:parameter_class}, \ref{ass:objective}, \& \ref{ass:loss} hold and choose $n= \sqrt{T}, \lambda = C_\lambda \sqrt{T}$. Then $\forall t \in [T]$ in Phase II of Algorithm \ref{alg:go_ucb}, w.p. $>1-\delta$,
\begin{align*}
\|\hat{w}_t - w^*\|^2_{\Sigma_t} &\leq \beta_t.
\end{align*}
\end{lemma}
For reader's easy reference, we write our choice of $\beta_t$ again in eq. \eqref{eq:beta_2}. Note this lemma requires careful choices of $\lambda$ and $n$ because $\beta_t$ appears later in the cumulative regret bound and $\beta_t$ is required to be at the rate of $\tilde{O}(1)$. The proof has three steps. First we obtain the closed form solution of $\hat{w}_t$ as in eq. \eqref{eq:inner}. Next we use induction to prove that $\forall t \in [T], \|\hat{w}_t - w^*\|^2_2 \leq \tilde{O}(\tilde{C}/n)$ for some universal constant $\tilde{C}$. Finally we prove $\|\hat{w}_t - w^*\|^2_{\Sigma_t} \leq \beta_t$.

\subsection{Regret Analysis}\label{sec:reg}
To prove cumulative regrets bound of GO-UCB algorithm, we need following two lemmas of instantaneous regrets in Phase II of GO-UCB.

\begin{lemma}[Instantaneous regret bound]\label{lem:instant_regret}
Set $\Sigma_t, \hat{w}_t, \beta_t$ as in eq. \eqref{eq:sigma_t}, \eqref{eq:inner}, \& \eqref{eq:beta_t} and suppose Assumption \ref{ass:parameter_class}, \ref{ass:objective}, \& \ref{ass:loss} hold, then with probability $> 1- \delta$,
$w^*$ is contained in $\mathrm{Ball}_t$.
Define $u_t = \|\nabla f_{x_t}(\hat{w}_t)\|_{\Sigma^{-1}_t}$, then $\forall t \in [T]$ in Phase II of Algorithm \ref{alg:go_ucb},
\begin{align*}
r_t \leq 2\sqrt{\beta_t}u_t + \frac{2\beta_t C_h}{\lambda}.
\end{align*}
\end{lemma}
The first term of the upper bound is pretty standard, seen also in LinUCB \citep{abbasi2011improved} and GP-UCB \citep{srinivas2010gaussian}. After we apply first order gradient approximation of the objective function, the second term is the upper bound of the high order residual term, which introduces extra challenge to derive the upper bound. 

Technically, proof of Lemma \ref{lem:instant_regret} requires $w^*$ is contained in our parameter uncertainty ball $\mathrm{Ball}_t$ with high probability throughout Phase II of GO-UCB, which has been proven in Lemma \ref{lem:feasible_ball}. Later, the proof utilizes Taylor's theorem and uses the convexity of $\mathrm{Ball}_t$ twice. See Appendix \ref{sec:regret}. The next lemma is an extension of Lemma \ref{lem:instant_regret}, where the proof uses monotonically increasing property of $\beta_t$ in $t$. 

\begin{lemma}[Summation of squared instantaneous regret bound]\label{lem:sos_instant_regret}
Set $\Sigma_t, \hat{w}_t, \beta_t$ as in eq. \eqref{eq:sigma_t}, \eqref{eq:inner}, \& \eqref{eq:beta_t} and suppose Assumption \ref{ass:parameter_class}, \ref{ass:objective}, \& \ref{ass:loss} hold, then with probability $> 1- \delta$,
$w^*$ is contained in $\mathrm{Ball}_t$ and $\forall t \in [T]$ in Phase II of Algorithm \ref{alg:go_ucb},
\begin{align*}
\sum_{t=1}^T r^2_t \leq 16\beta_T d_w \log \left(1 + \frac{TC_g^2}{d_w \lambda}\right) + \frac{8\beta^2_T C^2_h T}{\lambda^2}.
\end{align*}
\end{lemma}

Proof of Theorem \ref{thm:cr} follows by putting everything together via Cauchy-Shwartz inequality
$\sum_{t=1}^T r_t \leq \sqrt{T\sum_{t=1}^T r_t^2}$.

\begin{figure*}[t]
	\centering
	\begin{minipage}{0.32\linewidth}\centering
		\includegraphics[width=\textwidth]{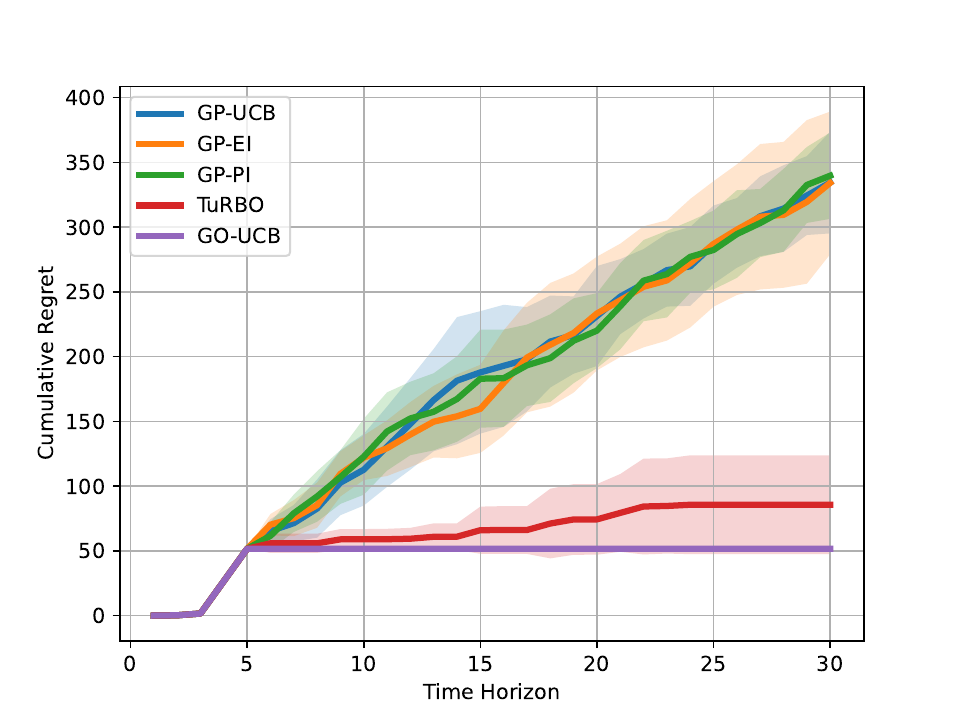}
		(a) $f_1$ (realizable)
	\end{minipage}
	\begin{minipage}{0.32\linewidth}\centering
		\includegraphics[width=\textwidth]{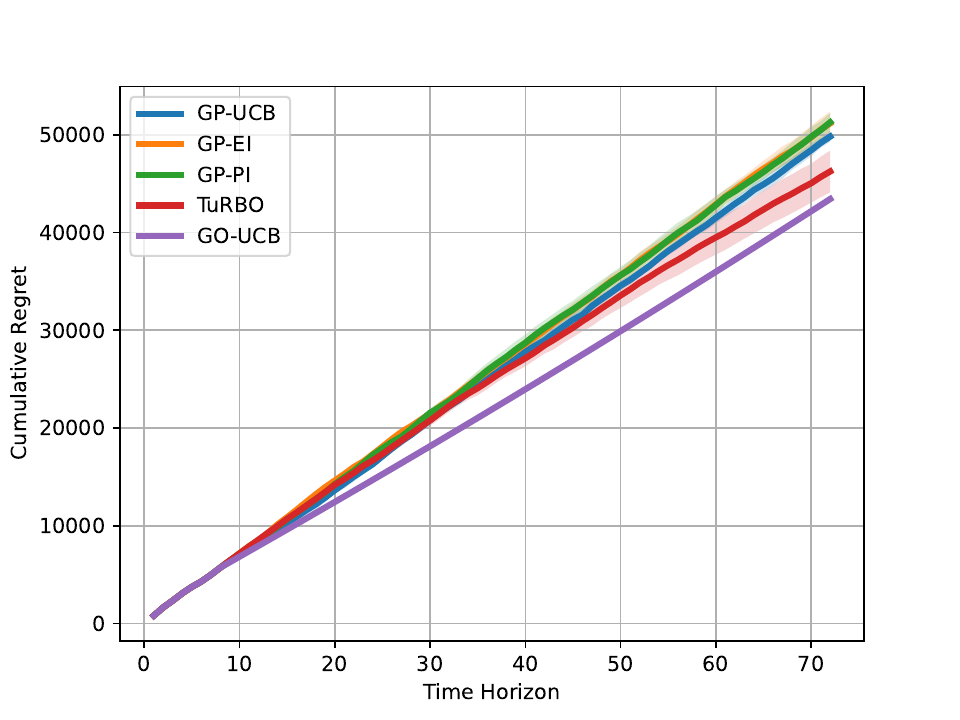}
		(b) $f_2$ (misspecified)
	\end{minipage}
	\begin{minipage}{0.32\linewidth}\centering
		\includegraphics[width=\textwidth]{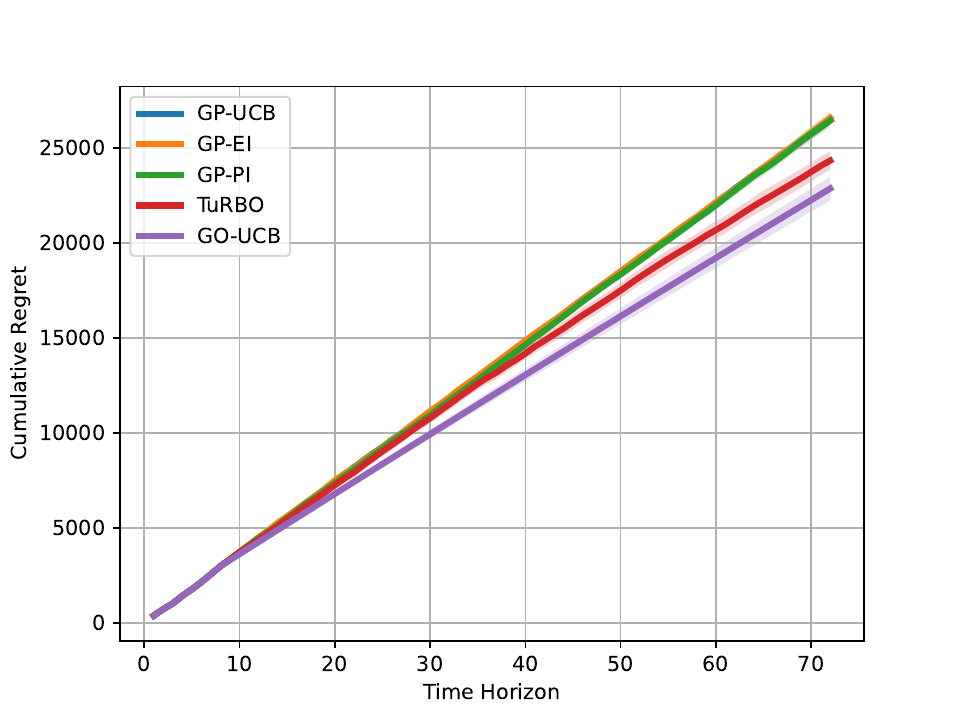}
		(c) $f_3$ (misspecified)
	\end{minipage}
\caption{Cumulative regrets (the lower the better) of all algorithms on $20$-dimensional $f_1, f_2, f_3$ synthetic functions.
}\label{fig:simulation}
\end{figure*}

\begin{figure*}[!htbp]
	\centering
	\begin{minipage}{0.32\linewidth}\centering
		\includegraphics[width=\textwidth]{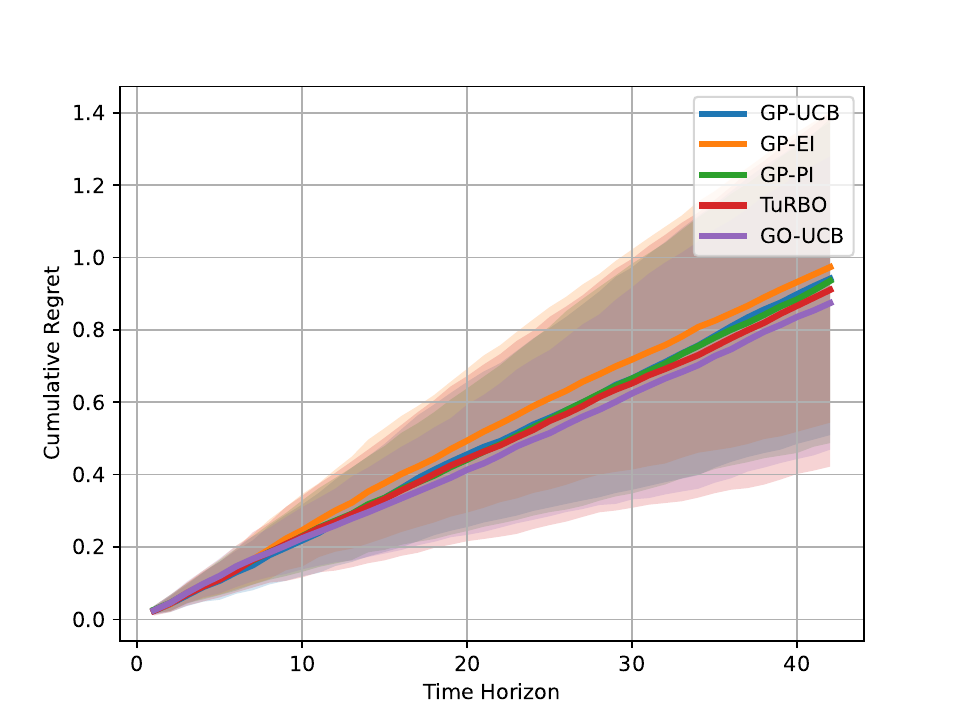}
		(a) Random forest ($d_x = 7$)
	\end{minipage}
	\begin{minipage}{0.32\linewidth}\centering
		\includegraphics[width=\textwidth]{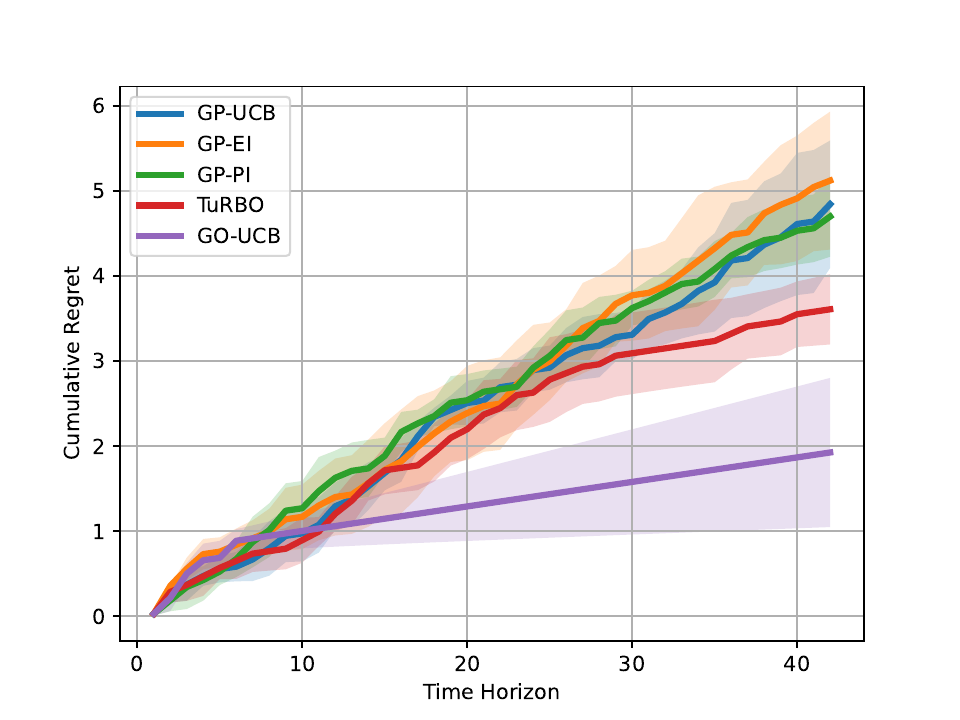}
		(b) Multi-layer perceptron ($d_x = 8$)
	\end{minipage}
	\begin{minipage}{0.32\linewidth}\centering
		\includegraphics[width=\textwidth]{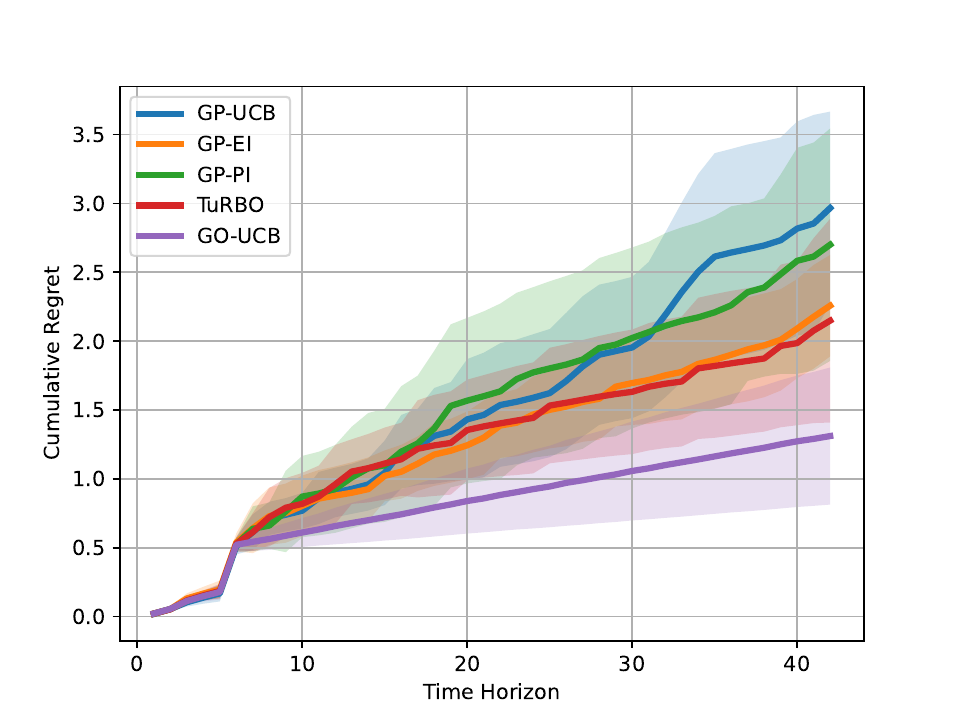}
		(c) Gradient boosting ($d_x = 11$)
	\end{minipage}
\caption{Cumulative regrets (the lower the better) of all algorithms in real-world hyperparameter tuning task on Breast-cancer dataset.
}\label{fig:real}
\end{figure*}

\section{Experiments}\label{sec:experiment}
We compare our GO-UCB algorithm with four Bayesian Optimization (BO) algorithms: GP-EI \citep{jones1998efficient}, GP-PI \citep{kushner1964new}, GP-UCB \citep{srinivas2010gaussian}, and Trust Region BO (TuRBO) \citep{eriksson2019scalable}, where the first three are classical methods and TuRBO is a more advanced algorithm designed for high-dimensional cases. 

To run GO-UCB, we choose our parametric function model $\hat{f}$ to be a two linear layer neural network with \texttt{sigmoid} function being the activation function:
\begin{align*}
\hat{f}(x) = \texttt{linear2}(\texttt{sigmoid}(\texttt{linear1}(x))),
\end{align*}
where $w_1, b_1$ denote the weight and bias of $\texttt{linear1}$ layer and $w_2, b_2$ denote those of $\texttt{linear2}$ layer.  Specifically, we set $w_1 \in \mathbb{R}^{25\times d_x}, b_1 \in \mathbb{R}^{25}, w_2 \in \mathbb{R}^{25}, b_2 \in \mathbb{R}$, meaning the dimension of activation function is $25$. All implementations are based on BoTorch framework \citep{balandat2020botorch} and sklearn package \citep{head2021skopt} with default parameter settings. To help readers reproduce our results, implementation details are shown in Appendix \ref{sec:imp_goucb}.

\subsection{Synthetic Experiments}\label{sec:syn}
First, we test all algorithms on three high-dimensional synthetic functions defined on $[-5, 5]^{d_x}$ where $d_x = 20$, including both realizable and misspecified cases. The first test function $f_1$ is created by setting all elements in $w_1, b_1, w_2, b_2$ in $\hat{f}$ to be $1$, so $f_1$ is a realizable function given $\hat{f}$. The second and third test functions $f_2, f_3$ are Styblinski-Tang function and Rastrigin function, defined as:
\begin{align*}
f_2 &= -\half \sum_{i=1}^{20} x_i^4 - 16 x^2_i + 5 x_i,\\
f_3 &= -200 + \sum_{i=1}^{20} 10 \cos(2 \pi x_i) - x^2_i,
\end{align*}
where $x_i$ denotes the $i$-th element in its $20$ dimensions, so $f_2, f_3$ are misspecified functions given $\hat{f}$.
We set $n=5, T=25$ for $f_1$ and $n=8,T=64$ for $f_2, f_3$. To reduce the effect of randomness in all algorithms, we repeat the whole optimization process for $5$ times for all algorithms and report mean and error bar of cumulative regrets. The error bar is measured by Wald's test with $95\%$ confidence, i.e., $1.96 \nu/\sqrt{5}$ where $\nu$ is standard deviation of cumulative regrets and $5$ is the number of repetitions.

From Figure \ref{fig:simulation}, we learn that in all tasks our GO-UCB algorithm performs better than all other four BO approaches. Among BO approches, TuRBO performs the best since it is specifically designed for high-dimensional tasks. In Figure \ref{fig:simulation}(a), mean of cumulative regrets of GO-UCB and TuRBO stays the same when $t \geq 22$, which means that both of them have found the global optima, but GO-UCB algorithm is able to find the optimal point shortly after Phase I and enjoys the least error bar. It is well expected since $f_1$ is a realizable function for $\hat{f}$. Unfortunately, GP-UCB, GP-EI, and GP-PI incur almost linear regrets, showing the bad performances of classical BO algorithms in high-dimensional cases.

In Figure \ref{fig:simulation}(b) and \ref{fig:simulation}(c), all methods are suffering from linear regrets because $f_2, f_3$ are misspecified functions. The gap between GO-UCB and other methods is smaller in Figure \ref{fig:simulation}(c) than in \ref{fig:simulation}(b) because optimizing $f_3$ is more challenging than $f_2$ since $f_3$ has more local optimal points.

\subsection{Real-World Experiments}\label{sec:real}
To illustrate the GO-UCB algorithm works in real-world tasks, we do hyperparameter tuning experiments on three tasks using three classifiers. Three UCI datasets \citep{Dua:2019} are Breat-cancer, Australian, and Diabetes, and three classifiers are random forest, multi-layer perceptron, and gradient boosting where each of them has $7,8,11$ hyperparameters. For each classifier on each dataset, the function mapping from hyperparameters to classification accuracy is the black-box function that we are maximizing, so the input space dimension $d_x=7,8,11$ for each classifier. We use cumulative regret to evaluate hyperparameter tuning performances, however, best accuracy $f^*$ is unknown ahead of time so we set it to be the best empirical accuracy of each task. To reduce the effect of randomness, we divide each dataset into 5 folds and every time use 4 folds for training and remaining 1 fold for testing. We report mean and error bar of cumulative regrets where error bar is measured by Wald's test, the same as synthetic experiments.

Figure \ref{fig:real} shows results on Breast-cancer dataset. In Figure \ref{fig:real}(b)(c) GO-UCB performs statistically much better that all other BO algorithms since there is almost no error bar gap between TuRBO and GO-UCB. It shows that GO-UCB can be deployed in real-world applications to replace BO methods. Also, in Figure \ref{fig:real}(b) performance of GO-UCB Phase I is not good but GO-UCB can still perform better than others in Phase II, which shows the effectiveness of Phase II of GO-UCB. In Figure \ref{fig:real}(a) all algorithms have similar performances. In Figure \ref{fig:real}(b), TuRBO performs similarly as GP-UCB, GP-EI, and GP-PI when $t \leq 23$, but after $t=23$ it performs better and shows a curved regret line by finding optimal points. Due to page limit, results on Australian and Diabetes datasets are shown in Appendix \ref{sec:real_detail} where similar algorithm performances can be seen.

Note in experiments, we choose parametric model $\hat{f}$ to be a two linear layer neural network. In more real-world experiments, one can choose the model $\hat{f}$ in GO-UCB to be simpler functions or much more complex functions, e.g., deep neural networks, depending on task requirements.

\section{Conclusion}\label{sec:conclusion}
Global non-convex optimization is an important problem that widely exists in many real-world applications, e.g., deep learning hyper-parameter tuning and new material design. However, solving this optimization problem in general is NP-hard. Existing work relies on Gaussian process assumption, e.g., Bayesian optimization, or other non-parametric family which suffers from the curse of dimensionality. 

We propose the first algorithm to solve such global optimization with parametric function approximation, which shows a new way of global optimization. GO-UCB first uniformly explores the function and collects a set of observation points and then uses the optimistic exploration to actively select points. At the core of GO-UCB is a carefully designed uncertainty set over parameters based on gradients that allows optimistic exploration. Under realizable parameter class assumption and a few mild geometric conditions, our theoretical analysis shows that cumulative regret of GO-UCB is at the rate of $\tilde{O}(\sqrt{T})$, which is dimension-free in terms of function domain $\cX$. 
Our high-dimensional synthetic test shows that GO-UCB works better than BO methods even in misspecified setting. Moreover, GO-UCB performs better than BO algorithms in real-world hyperparameter tuning tasks, which may be of independent interest. 
 
There is $\mu$, the strongly convexity parameter, in the denominator of upper bound in Theorem \ref{thm:cr}. $\mu$ can be small in practice, thus the upper bound can be large. Developing the cumulative regret bound containing a term depending on $\mu$ but being independent to $T$ remains a future problem.

\subsection*{Acknowledgments}
The work is partially supported by NSF Awards \#1934641 and \#2134214. We thank Ming Yin and Dan Qiao for helpful discussion and careful proofreading of an early version of the manuscript, as well as Andrew G. Wilson for pointing out that Bayesian optimization methods do not necessarily use Gaussian processes as surrogate models. Finally, we thank ICML reviewers and the area chair for their valuable input that led to improvements to the paper. 

\bibliographystyle{icml2023}
\bibliography{bib}

\newpage
\onecolumn
\appendix

\section{Notation Table}\label{sec:table}
\begin{table}[!htbp]
\centering
\caption{Symbols and notations.}\label{tab:notations}
\begin{tabular}{ccl}
\noalign{\smallskip} \hline
\textbf{Symbol}  & \textbf{Definition} & \textbf{Description} \\ \hline
$\|A\|_\mathrm{op}$ & & operator norm\\ \hline
$\mathrm{Ball}_t$ &  eq. \eqref{eq:ball} & parameter uncertainty region at round $t$\\ \hline
$\beta_t$ & eq. \eqref{eq:beta_t}  & parameter uncertainty region radius at round $t$\\ \hline
$\mu$ & & local strong convexity parameter \\ \hline
$c$ & & local self-concordance parameter \\ \hline
$C, \zeta$ & & constants\\ \hline
$d_x$       &      &  domain dimension     \\ \hline
$d_w$       &      &  parameter dimension     \\ \hline
$\delta$       &      &  failure probability \\ \hline
$\varepsilon$       &      &  covering number discretization distance    \\ \hline
$\eta$       & $\sigma$-sub-Gaussian     &  observation noise     \\ \hline
$f_w(x)$       &      &  objective function at $x$ parameterized by $w$    \\ \hline
$f_x(w)$       &      &  objective function at $w$ parameterized by $x$    \\ \hline
$\nabla f_x(w)$       &      &  1st order derivative w.r.t. $w$ parameterized by $x$    \\ \hline
$\nabla^2 f_x(w)$       &      &  2nd order derivative w.r.t. $w$ parameterized by $x$    \\ \hline
$F$ &           &  function range constant bound   \\ \hline 
$\gamma, \tau$ &           &  growth condition parameters     \\ \hline
$\iota, \iota', \iota{''}$ &           &  logarithmic terms     \\ \hline
$L(w)$ & $\E[(f_x(w) - f_x(w^*))^2]$          &  expected loss function     \\ \hline 
$\lambda$ & & regularization parameter \\ \hline
$n$             &      & time horizon in Phase I \\ \hline
$[n]$             &  $\{1,2,...,n\}$    & integer set of size $n$\\ \hline
$\mathrm{Oracle}$             &      & regression oracle \\ \hline
$r_t$ & $f_{w^*}(x^*) - f_{w^*}(x_t)$ & instantaneous regret at round $t$ \\ \hline
$R_T$ & $\sum_{t=1}^T r_t$ & cumulative regret after round $T$\\ \hline
$\Sigma_t$ & eq. \eqref{eq:sigma_t}           & covariance matrix at round $t$ \\ \hline
$T$ &           & time horizon in Phase II    \\ \hline
$\cU$ & & uniform distribution \\ \hline
$w$ &  $w \in \cW$         & function parameter    \\ \hline 
$w^*$ &  $w^* \in \cW$         & true parameter    \\ \hline 
$\hat{w}_0$ &           & oracle-estimated parameter after Phase I    \\ \hline 
$\hat{w}_t$ & eq. \eqref{eq:inner}          & updated parameter at round $t$\\ \hline 
$\cW$ & $\cW \subseteq [0,1]^{d_w}$ & parameter space \\ \hline
$x$ & $x \in \cX$ & data point \\\hline
$x^*$ & & optimal data point \\\hline
$\|x\|_\infty$ &$(\sum_{i=1}^d |x_i|^p)^{1/p}$ & $\ell_p$ norm \\\hline
$\|x\|_p$ &$\max_{i \in [d]} |x_i|$ & $\ell_\infty$ norm \\\hline
$\|x\|_A$ &$\sqrt{x^\top A x}$ & distance defined by square matrix $A$ \\\hline
$\cX$   & $\cX \subseteq \mathbb{R}^{d_x}$        &  function domain     \\  \hline
$\cY$   & $\cY = [-F, F]$        &  function range     \\  \hline
\end{tabular}
\end{table}

\section{Auxiliary Technical Lemmas}\label{sec:auxiliary}
In this section, we list auxiliary lemmas that are used in proofs.

\begin{lemma}[Adapted from eq. (5) (6) of  \citet{nowak2007complexity}]\label{lem:regression}
Given a dataset $\{x_i,y_i\}_{j=1}^n$ where $y_j$ is generated from eq. \eqref{eq:y} and $f_0$ is the underlying true function. Let $\hat{f}$ be an ERM estimator taking values in $\cF$ where $\cF$ is a finite set and 
$\cF \subset \{f: [0,1]^d \rightarrow [-F,F]\}$ for some $F \geq 1$. Then with probability $> 1- \delta$, $\hat{f}$ satisfies that
\begin{align*}
\E[(\hat{f} - f_0)^2] \leq \left(\frac{1+\alpha}{1-\alpha}\right) \left( \inf_{f\in \cF } \E[(f - f_0)^2] + \frac{F^2 \log(|\cF|) \log(2)}{n\alpha}\right) + \frac{2\log(2/\delta)}{n\alpha},
\end{align*}
for all $\alpha \in (0, 1]$.
\end{lemma}

\begin{lemma}[Sherman-Morrison lemma \citep{sherman1950adjustment}]\label{lem:sherman}
Let $A$ denote a matrix and $b,c$ denote two vectors. Then
\begin{align*}
(A + bc^\top)^{-1} = A^{-1} - \frac{A^{-1}bc^\top A^{-1}}{1+ c^\top A^{-1} b}.
\end{align*}
\end{lemma}

\begin{lemma}[Self-normalized bound for vector-valued martingales \citep{abbasi2011improved,agarwal2021rl}]\label{lem:self_norm}
Let $\{\eta_i\}_{i=1}^\infty$ be a real-valued stochastic process with corresponding ﬁltration $\{\cF_i\}_{i=1}^\infty$ such that $\eta_i$ is $\cF_i$ measurable, $\E[\eta_i | \cF_{i-1} ] = 0$, and $\eta_i$ is conditionally $\sigma$-sub-Gaussian with $\sigma \in \mathbb{R}^+$. Let $\{X_i\}_{i=1}^\infty$ be a stochastic process with $X_i \in \cH$ (some Hilbert space) and $X_i$ being $F_t$ measurable. Assume that a linear operator $\Sigma:\cH \rightarrow \cH$ is positive deﬁnite, i.e., $x^\top \Sigma x > 0$ for any $x \in \cH$. For any $t$, deﬁne the linear operator $\Sigma_t = \Sigma_0 + \sum_{i=1}^t X_i X_i^\top$ (here $xx^\top$ denotes outer-product in $\cH$). With probability at least $1-\delta$, we have for all $t\geq 1$:
\begin{align*}
\left\|\sum_{i=1}^t X_i \eta_i \right\|^2_{\Sigma_t^{-1}} \leq \sigma^2 \log \left(\frac{\det(\Sigma_t) \det(\Sigma_0)^{-1}}{\delta^2} \right).
\end{align*}
\end{lemma}

\section{Missing Proofs}\label{sec:miss}
In this section, we show complete proofs of all technical results in the main paper. For reader's easy reference, we define $\iota$ as a logarithmic term depending on $n, C_h, 2/\delta$ (w.p. $>1-\delta/2$), $\iota'$ as a logarithmic term depending on $t, d_w, C_g, 1/\lambda, 2/\delta$ (w.p. $>1-\delta/2$), and $\iota{''}$ as a logarithmic term depending on $t, d_w, C_g, 1/\lambda$.

\subsection{Regression Oracle Guarantee}

\begin{lemma}[Restatement of Lemma \ref{lem:mle_oracle}]
Suppose Assumption \ref{ass:parameter_class} \& \ref{ass:objective} hold. There is an absolute constant $C'$, such that after round $n$ in Phase I of Algorithm \ref{alg:go_ucb}, with probability $>1 - \delta/2$, regression oracle estimated $\hat{w}_0$ satisfies
\begin{align*}
\E_{x \sim \cU} [(f_x(\hat{w}_0) - f_x(w^*))^2] \leq \frac{C' d_w F^2 \iota}{n},
\end{align*}
where $\iota$ is the logarithmic term depending on $n, C_h, 2/\delta$.
\end{lemma}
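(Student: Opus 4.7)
The plan is to reduce Lemma \ref{lem:mle_oracle} to the finite-class regression bound in Lemma \ref{lem:regression} by means of an $\varepsilon$-net argument on the parameter space $\cW\subset[0,1]^{d_w}$.

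First I would construct an $\varepsilon$-net $\cN_\varepsilon\subset\cW$ of cardinality $|\cN_\varepsilon|\leq(1/\varepsilon+1)^{d_w}$ and form the finite surrogate class $\cF_\varepsilon:=\{f_w:w\in\cN_\varepsilon\}$, which inherits the uniform bound $|f|\leq F$ from Assumption \ref{ass:objective}. The same assumption gives $\|\nabla f_x(w)\|_2\leq C_g$, so $w\mapsto f_x(w)$ is $C_g$-Lipschitz uniformly in $x$; consequently for every $w\in\cW$ its closest net point $\tilde{w}\in\cN_\varepsilon$ satisfies $\|f_w-f_{\tilde{w}}\|_\infty\leq C_g\sqrt{d_w}\,\varepsilon$. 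Combined with realizability ($f_0=f_{w^*}$), this yields $\inf_{f\in\cF_\varepsilon}\E_{x\sim\cU}[(f(x)-f_{w^*}(x))^2]\leq C_g^2 d_w\varepsilon^2$.

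Next I would apply Lemma \ref{lem:regression} to $\cF_\varepsilon$ with, e.g., $\alpha=1/2$ and failure probability $\delta/4$, so that the ERM $\hat{f}^\varepsilon$ over $\cF_\varepsilon$ obeys
\begin{align*}
\E[(\hat{f}^\varepsilon-f_{w^*})^2]\leq 3\left(C_g^2 d_w\varepsilon^2+\frac{2F^2\log(2)\,d_w\log(1/\varepsilon+1)}{n}\right)+\frac{4\log(8/\delta)}{n}.
\end{align*}
To pass from $\hat{f}^\varepsilon$ to the oracle output $\hat{w}_0=\argmin_{w\in\cW}\sum_j(f_w(x_j)-y_j)^2$, I would let $\tilde{w}\in\cN_\varepsilon$ be the nearest net point to $\hat{w}_0$ and compare empirical risks: by Cauchy--Schwarz together with a sub-Gaussian tail bound on $\max_j|\eta_j|$, the empirical-risk gap is $|\hat{R}_n(\tilde{w})-\hat{R}_n(\hat{w}_0)|\leq O(C_g\sqrt{d_w}\,\varepsilon(F+\sigma\sqrt{\log(n/\delta)}))$. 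Using ERM optimality of $\hat{f}^\varepsilon$ inside $\cF_\varepsilon$ and of $\hat{w}_0$ inside $\cW$, followed by one more Lipschitz step to return to $f_{\hat{w}_0}$, this promotes the display above to $\E[(f_{\hat{w}_0}-f_{w^*})^2]\leq 2\E[(\hat{f}^\varepsilon-f_{w^*})^2]+O(C_g^2 d_w\varepsilon^2)$. Choosing $\varepsilon$ on the order of $1/(nC_h)$ makes the $\varepsilon^2$ terms $O(1/n^2)$ and produces the claimed logarithmic factor $\iota=O(\log n+\log C_h+\log(1/\delta))$, after which the stated bound $C'd_w F^2\iota/n$ follows with a universal constant $C'$.

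The main obstacle is the discrete-to-continuous transfer in the last step: Lemma \ref{lem:regression} only controls the ERM inside a finite class, whereas $\hat{w}_0$ ranges over all of $\cW$, so some care is required to route the $\varepsilon$-cover approximation error through both the empirical risk (to compare minimizers) and the population risk (to state the conclusion) without it dominating the final $\tilde{O}(1/n)$ rate. The Hessian constant $C_h$ enters mainly as the price for choosing $\varepsilon$ fine enough that all residual $\varepsilon$-dependent terms collapse into the logarithmic factor; note that Assumption \ref{ass:loss} is not needed for this lemma and is only used later in deriving Theorem \ref{thm:mle_guarantee}.
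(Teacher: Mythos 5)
Your proposal takes essentially the same route as the paper: reduce to the finite-class oracle inequality of Lemma \ref{lem:regression} via an $\varepsilon$-cover of $\cW\subset[0,1]^{d_w}$, use realizability (Assumption \ref{ass:parameter_class}) to kill the $\inf_{f\in\cF}$ approximation term, and pick $\varepsilon$ polynomially small in $n$ so the discretization error is absorbed into the $\tilde O(1/n)$ rate and only surfaces in the logarithmic factor $\iota$; you are also right that Assumption \ref{ass:loss} plays no role here. The one place where you diverge is the discrete-to-continuous transfer, and your version is actually the more explicit one: the paper simply writes $\E[(f_x(\hat w_0)-f_x(w^*))^2]\le 2\E[(f_x(\hat w_0)-f_x(\tilde w))^2]+2\E[(f_x(\tilde w)-f_x(w^*))^2]$, charges the first term to the discretization error (with constant $C_h$ and the choice $\varepsilon=1/\sqrt{nC_h^2}$, which is where $C_h$ enters $\iota$), and treats $\tilde w$ simultaneously as the ERM over the net and as an $\varepsilon$-close point to $\hat w_0$; you instead route the comparison through empirical risks using ERM optimality of $\hat w_0$ over $\cW$ and of $\hat f^\varepsilon$ over the net. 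If you follow that route, note that Lemma \ref{lem:regression} as stated only covers exact ERMs over the finite class, so bounding the population risk of the projected point $\tilde w$ requires either the (standard) approximate-minimizer extension of that lemma or invoking the uniform concentration inside its proof — make that explicit, since knowing $\tilde w$ nearly minimizes the empirical risk does not by itself transfer the population bound proved for $\hat f^\varepsilon$. With that caveat handled, your argument yields the same $C'd_wF^2\iota/n$ bound, with $C_g$ rather than $C_h$ governing the Lipschitz step, which only changes the constants inside $\iota$.
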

\begin{proof}
The regression oracle lemma establishes on Lemma \ref{lem:regression} which works only for finite function class. In order to work with our continuous parameter class $\cW$, we need $\varepsilon$-covering number argument. 

First, let $\tilde{w}, \widetilde{\cW}$ denote the ERM parameter and finite parameter class after applying covering number argument on $\cW$. By Lemma \ref{lem:regression}, we find that with probability $>1-\delta/2$,
\begin{align*}
\E_{x \sim \cU}[(f_x(\tilde{w}) - f_x(w^*))^2] &\leq \left(\frac{1+\alpha}{1-\alpha}\right) \left( \inf_{w \in \widetilde{\cW} \cup \{w^*\}} \E_{x\sim \cU}[(f_x(w) - f_x(w^*))^2] + \frac{F^2  \log(|\widetilde{\cW}|) \log(2)}{n\alpha}\right) + \frac{2\log(4/\delta)}{n\alpha}\\
&\leq \left(\frac{1+\alpha}{1-\alpha}\right) \left( \frac{F^2 \log(|\widetilde{\cW}|) \log(2)}{n\alpha}\right) + \frac{2\log(4/\delta)}{n\alpha},
\end{align*}
where the second inequality is by realizable assumption (Assumption \ref{ass:parameter_class}).
Our parameter class $\cW \subseteq [0, 1]^{d_w}$, so $\log(|\widetilde{\cW}|) = \log(1/\varepsilon^{d_w})= d_w\log(1/\varepsilon)$ and the new upper bound is that with probability $> 1-\delta/2$,
\begin{align*}
\E_{x\sim \cU}[(f_x(\tilde{w}) - f_x(w^*))^2] \leq C^{''}\left(\frac{d_w F^2 \log(1/\varepsilon)}{n} + \frac{\log(2/\delta)}{n}\right),
\end{align*}
where $C^{''}$ is a universal constant obtained by choosing $\alpha=1/2$. Note $\tilde{w}$ is the ERM parameter in $\widetilde{\cW}$ after discretization, not our target parameter $\hat{w}_0 \in \cW$. By $(a+b)^2\leq 2a^2 + 2b^2$,
\begin{align}
\E_{x\sim \cU}[(f_x(\hat{w}_0) - f_x(w^*))^2] &\leq 2\E_{x\sim \cU}[(f_x(\Hat{w}_0) - f_x(\tilde{w}))^2] + 2\E_{x\sim \cU}[(f_x(\tilde{w}) - f_x(w^*))^2] \nonumber \\
&\leq 2\varepsilon^2 C_h^2 + 2C^{''}\left(\frac{d_w F^2 \log(1/\varepsilon)}{n} + \frac{\log(2/\delta)}{n}\right)\label{eq:interm1}
\end{align}
where the second line applies discretization error $\varepsilon$ and Assumption \ref{ass:objective}.
By choosing $\varepsilon = 1/\sqrt{n C_h^2}$, 
we get
$$
\eqref{eq:interm1} =\frac{2}{n} + \frac{C^{''}d_w F^2\log(n C_h^2)}{n} + \frac{2C^{''}\log(2/\delta)}{n} \leq C' \frac{d_w F^2\log(n C_h^2) +\log(2/\delta)}{n}$$
where we can take $C' = 2C^{''}$ (assuming $2<C^{''}d_w F^2\log(n C_h^2)$). The proof completes by defining $\iota$ as the logarithmic term depending on $n, C_h, 2/\delta$.
\end{proof}

\begin{theorem}[Restatement of Theorem \ref{thm:mle_guarantee}] 
Suppose Assumption \ref{ass:parameter_class}, \ref{ass:objective}, \& \ref{ass:loss} hold. There is an absolute constant $C$ such that after round $n$ in Phase I of Algorithm \ref{alg:go_ucb} where $n$ satisfies
\begin{align*}
n \geq C d_w F^2\iota \cdot \max \left\{   \frac{\mu^{\gamma/(2-\gamma)}}{\tau^{2/(2-\gamma)}}, \frac{\zeta}{\mu c^2} \right\},
\end{align*}
with probability $> 1-\delta/2$, regression oracle estimated $\hat{w}_0$ satisfies
\begin{align*}
\|\hat{w}_0 - w^*\|^2_2 \leq \frac{C d_w F^2 \iota}{\mu n},
\end{align*}
where $\iota$ is the logarithmic term depending on $n, C_h, 2/\delta$.
\end{theorem}
\begin{proof}
Recall the definition of expected loss function $L(w) = \E_{x \sim \cU}(f_x(w)-f_x(w^*))^2$ and the second order Taylor's theorem, $L(\hat{w}_0)$ at $w^*$ can be written as
\begin{align*}
L(\hat{w}_0) &= L(w^*) + (\hat{w}_0 - w^*) \nabla L(w^*) + \half \|\hat{w}_0 - w^*\|^2_{\nabla^2 L(\tilde{w})},
\end{align*}
where $\tilde{w}$ lies between $\hat{w}_0$ and $w^*$. Also, because $\nabla L(w^*) = \nabla E_{x\sim \cU}(f_x(w^*) - f_x(w^*))^2=0$, then with probability $> 1-\delta/2$,
\begin{align}
\half \|\hat{w}_0 - w^*\|^2_{\nabla^2 L(\tilde{w})} = L(\hat{w}_0) - L(w^*) \leq \frac{C' d_w F^2 \iota}{n},\label{eq:ll}
\end{align}
where the inequality is due to Lemma \ref{lem:mle_oracle}.

Next, we prove the following lemma stating after a certain number of $n$ samples, $\|\hat{w}_0 - w^*\|_{\nabla^2 L(w^*)}$ can be bounded by the parameter $c$ from our local-self-concordance assumption. 
\begin{lemma}\label{lem:n_hessian}
Suppose Assumption \ref{ass:parameter_class}, \ref{ass:objective}, \& \ref{ass:loss} hold. There is an absolute constant $C'$ such that after round $n$ in Phase I of Algorithm \ref{alg:go_ucb} where $n$ satisfies
\begin{align*}
n \geq 2C' d_w F^2\iota \cdot \max \left\{   \frac{\mu^{\gamma/(2-\gamma)}}{\tau^{2/(2-\gamma)}}, \frac{\zeta}{\mu c^2} \right\},
\end{align*}
then with probability $> 1-\delta/2$, 
\begin{align*}
\|\hat{w}_0 - w^*\|_{\nabla^2 L(w^*)} \leq c. 
\end{align*}
\end{lemma}
\begin{proof}
First we will prove that when $n$ satisfies the first condition, then $\|\hat{w}_0 - w^*\|_2 \leq (\tau/\mu)^{1/(2-\gamma)}$ by a proof by contradiction.

Assume $\|\hat{w}_0 - w^*\|_2 > (\tau/\mu)^{1/(2-\gamma)}$. Check that under this condition, we have $\frac{\tau}{2}\|\hat{w}_0-w^*\|_2^\gamma < \frac{\mu}{2} \|\hat{w}_0-w^*\|_2^2$, therefore the growth-condition
(rather than the local strong convexity) part of the Assumption \ref{ass:loss} is active. By the $(\tau,\gamma)$-growth condition, we have
\begin{align*}
\frac{\tau}{2}\|\hat{w}_0-w^*\|^\gamma_2 &\leq L(\hat{w}_0)-L(w^*) \leq \frac{C' d_w F^2 \iota}{n}.
\end{align*}

Substituting the first lower bound of $n$ in the assumption, we get 
$$\|\hat{w}_0-w^*\|\leq (\tau/\mu)^{1/(2-\gamma)},$$ thus having a contradiction. This proves that when $n$ satisfies the first condition, $\hat{w}_0$ is within the region where local strong convexity is active.

By the local strong-convexity condition, 
\begin{align*}
\frac{\mu}{2} \|\hat{w}_0-w^*\|^2_2 &\leq L(\hat{w}_0)-L(w^*) \leq \frac{C' d_w F^2 \iota}{n }.
\end{align*}
Then,
\begin{align*}
\|\hat{w}_0-w^*\|_{\nabla^2 L(w^*)}&\leq \sqrt{\zeta} \|\hat{w}_0-w^*\|_2 \leq \sqrt{\frac{2\zeta C' d_w F^2 \iota}{\mu n}}.
\end{align*}
Substitute the second lower bound on $n$ that we assumed, we get that
\begin{align*}
\|\hat{w}_0-w^*\|_{\nabla^2 L(w^*)}&\leq  \sqrt{\frac{2\zeta C' d_w F^2 \iota}{\mu n}} \leq c.
\end{align*}
\end{proof}

Now we continue the proof of Theorem \ref{thm:mle_guarantee}. Observe that $\|\tilde{w}- w^*\|_{\nabla^2 L(w^*)}\leq \|\hat{w}_0 - w^*\|_{\nabla^2 L(w^*)}\leq c$, since $\tilde{w}$ lies on the line-segment between  $\hat{w}_0$ and $w^*$. It follows that by the $c$-local self-concordance assumption (Assumption \ref{ass:loss}),
\begin{align*}
(1-c)^2 \|\hat{w}_0- w^*\|^2_{\nabla^2 L(w^*)}\leq \|\hat{w}_0- w^*\|^2_{\nabla^2 L(\tilde{w})}.\label{eq:self}
\end{align*}
Therefore, by eq. \eqref{eq:ll}
\begin{align*}
\|\hat{w}_0 - w^*\|^2_{\nabla^2 L(w^*)} \leq \frac{2C' d_w F^2 \iota }{(1-c)^2n}.
\end{align*}
The proof completes by inequality $\|\hat{w}_0 - w^*\|^2_2 \leq \|\hat{w}_0 - w^*\|^2_{\nabla^2 L(w^*)}/\mu$ due to $\mu$-strongly convexity of $L(w)$ at $w^*$ (Assumption \ref{ass:loss}) and defining $C=2C'/(1-c)^2$.
\end{proof}

\subsection{Properties of Covariance Matrix $\Sigma_t$}\label{sec:sigma}
In eq. \eqref{eq:sigma_t}, $\Sigma_t$ is defined as $\lambda I + \sum_{i=0}^{t-1} \nabla f_{x_i}(\hat{w}_i) \nabla f_{x_i}(\hat{w}_i)^\top$. In this section, we prove three lemmas saying the change of $\Sigma_t$ as $t \in 1,...,T$ is bounded in Phase II of GO-UCB. The key observation is that at each round $i$, the change made to $\Sigma_t$ is $\nabla f_{x_i}(\hat{w}_i) \nabla f_{x_i}(\hat{w}_i)^\top$, which is only rank one. 

\begin{lemma}[Adapted from \citet{agarwal2021rl}]\label{lem:det}
Set $\Sigma_t, \hat{w}_t$ as in eq. \eqref{eq:sigma_t} \& \eqref{eq:inner}, suppose Assumption \ref{ass:parameter_class} \& \ref{ass:loss} hold, and define $u_t = \|\nabla f_{x_t}(\hat{w}_t)\|_{\Sigma^{-1}_t}$. Then
\begin{align*}
\det \Sigma_t = \det \Sigma_0 \prod_{i=0}^{t-1} (1 + u^2_i).
\end{align*}
\end{lemma}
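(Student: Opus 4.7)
The plan is to argue by induction on $t$, using the rank-one update structure of $\Sigma_t$ noted right after eq.~\eqref{eq:sigma_t}. From the definition,
\begin{equation*}
\Sigma_{t+1} \;=\; \Sigma_t + \nabla f_{x_t}(\hat{w}_t)\, \nabla f_{x_t}(\hat{w}_t)^\top,
\end{equation*}
so the whole lemma reduces to the one-step identity $\det \Sigma_{t+1} = \det \Sigma_t \,(1+u_t^2)$; multiplying these out telescopes to the claimed product.

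To establish the one-step identity, I would factor out $\Sigma_t^{1/2}$ (which is well-defined and invertible since $\Sigma_0 = \lambda I \succ 0$ and each subsequent update is PSD, so $\Sigma_t \succeq \lambda I$). Writing $v_t := \Sigma_t^{-1/2}\,\nabla f_{x_t}(\hat{w}_t)$, one has
\begin{equation*}
\Sigma_{t+1} \;=\; \Sigma_t^{1/2}\bigl(I + v_t v_t^\top\bigr)\Sigma_t^{1/2},
\end{equation*}
so taking determinants gives $\det \Sigma_{t+1} = \det \Sigma_t \cdot \det(I + v_t v_t^\top)$. Then the matrix determinant lemma (or a direct eigenvalue computation: $I + v_tv_t^\top$ has eigenvalue $1+\|v_t\|_2^2$ in the direction $v_t$ and eigenvalue $1$ in all orthogonal directions) yields $\det(I+v_tv_t^\top) = 1 + \|v_t\|_2^2$. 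Finally,
\begin{equation*}
\|v_t\|_2^2 \;=\; \nabla f_{x_t}(\hat{w}_t)^\top \Sigma_t^{-1} \nabla f_{x_t}(\hat{w}_t) \;=\; \|\nabla f_{x_t}(\hat{w}_t)\|_{\Sigma_t^{-1}}^2 \;=\; u_t^2,
\end{equation*}
which completes the one-step claim.

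With the one-step identity in hand, a straightforward induction on $t$ (base case $t=0$ trivially gives $\det\Sigma_0 = \det\Sigma_0$) yields
\begin{equation*}
\det \Sigma_t \;=\; \det \Sigma_0 \prod_{i=0}^{t-1}(1+u_i^2),
\end{equation*}
as required. There is no real obstacle here—the only point worth being careful about is checking that $\Sigma_t$ is positive definite so that $\Sigma_t^{1/2}$ exists, which is immediate from the regularizer $\lambda I$ in eq.~\eqref{eq:sigma_t}. Note that Assumptions~\ref{ass:parameter_class} and \ref{ass:loss} are not actually used by this lemma itself; they are presumably listed because subsequent lemmas in Section~\ref{sec:sigma} chain on top of this one.
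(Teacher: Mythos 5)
Your proposal is correct and follows essentially the same route as the paper: factor $\Sigma_{t+1} = \Sigma_t^{1/2}(I + v_tv_t^\top)\Sigma_t^{1/2}$ with $v_t = \Sigma_t^{-1/2}\nabla f_{x_t}(\hat{w}_t)$, use the rank-one determinant identity $\det(I+v_tv_t^\top) = 1+\|v_t\|_2^2 = 1+u_t^2$, and conclude by induction. Your added remarks on positive definiteness of $\Sigma_t$ and the non-use of the stated assumptions are accurate but do not change the argument.
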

\begin{proof}
Recall the definition of $\Sigma_t = \lambda I + \sum_{i=0}^{t-1} \nabla f_{x_i}(\hat{w}_i) \nabla f_{x_i}(\hat{w}_i)^\top$ and we can show that
\begin{align*}
\det \Sigma_{t+1} &= \det (\Sigma_t + \nabla f_{x_t}(w_t) \nabla f_{x_t}(w_t)^\top)\\
&= \det (\Sigma^\half_t(I + \Sigma^{-\half}_t \nabla f_{x_t}(w_t) \nabla f_{x_t}(w_t)^\top \Sigma^{-\half}_t)\Sigma^\half_t)\\
&= \det (\Sigma_t) \det(I + \Sigma^{-\half}_t \nabla f_{x_t}(w_t) (\Sigma^{-\half}_t \nabla f_{x_t}(w_t))^\top)\\
&= \det (\Sigma_t) \det(I + v_t v_t^\top),
\end{align*}
where $v_t = \Sigma^{-\half}_t \nabla f_{x_t}(w_t)$. Recall $u_t$ is defined as $\|\nabla f_{x_t}(\hat{w}_t)\|_{\Sigma_t^{-1}}$. Because $v_t v^\top_t$ is a rank one matrix, $\det(I + v_t v^\top_t) = 1 + u^2_t$. The proof completes by induction.
\end{proof}

\begin{lemma}[Adapted from \citet{agarwal2021rl}]\label{lem:log_det}
Set $\Sigma_t$ as in eq. \eqref{eq:sigma_t} and suppose Assumption \ref{ass:parameter_class}, \ref{ass:objective}, \& \ref{ass:loss} hold. Then
\begin{align*}
\log \left(\frac{\det \Sigma_{t-1}}{\det \Sigma_0}\right) \leq d_w \log \left(1 + \frac{t C_g^2}{d_w \lambda}\right).
\end{align*}
\end{lemma}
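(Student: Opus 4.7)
The plan is to bound $\log\det \Sigma_{t-1}$ by the trace via the standard AM--GM (equivalently, concavity of $\log$) inequality on eigenvalues, and then bound the trace directly using the uniform gradient bound from Assumption~\ref{ass:objective}. The elliptical-potential calculation from Lemma~\ref{lem:det} is not strictly needed for this particular bound; instead, the cleanest path is to work with the raw definition of $\Sigma_{t-1}$.

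First, let $\mu_1,\dots,\mu_{d_w}$ denote the eigenvalues of $\Sigma_{t-1}$. Since $\log$ is concave,
\begin{align*}
\log \det \Sigma_{t-1} \;=\; \sum_{k=1}^{d_w} \log \mu_k \;\leq\; d_w \log\!\left( \frac{1}{d_w}\sum_{k=1}^{d_w}\mu_k \right) \;=\; d_w \log\!\left(\frac{\tr \Sigma_{t-1}}{d_w}\right).
\end{align*}
Next, from the definition in \eqref{eq:sigma_t} and the cyclic/linearity properties of the trace,
\begin{align*}
\tr \Sigma_{t-1} \;=\; \tr(\lambda I) + \sum_{i=0}^{t-2} \tr\!\bigl(\nabla f_{x_i}(\hat{w}_i)\nabla f_{x_i}(\hat{w}_i)^\top\bigr) \;=\; d_w \lambda + \sum_{i=0}^{t-2} \|\nabla f_{x_i}(\hat{w}_i)\|_2^{2}.
\end{align*}
By Assumption~\ref{ass:objective}, each $\|\nabla f_{x_i}(\hat{w}_i)\|_2 \le C_g$, so $\tr \Sigma_{t-1} \le d_w \lambda + t\,C_g^{2}$ (where I use $t$ rather than the slightly tighter $t-1$, to match the form of the stated bound).

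Finally, since $\Sigma_0 = \lambda I$ we have $\det \Sigma_0 = \lambda^{d_w}$, and therefore
\begin{align*}
\log\!\left(\frac{\det \Sigma_{t-1}}{\det \Sigma_0}\right) \;\leq\; d_w \log\!\left(\frac{d_w \lambda + t\,C_g^{2}}{d_w}\right) - d_w \log \lambda \;=\; d_w \log\!\left(1 + \frac{t\,C_g^{2}}{d_w \lambda}\right),
\end{align*}
which is exactly the claim. There is no real obstacle here; if anything needs care, it is only the bookkeeping that $\Sigma_{t-1}$ contains $t-1$ rank-one updates on top of $\lambda I$, each of operator norm at most $C_g^2$, which is immediate from Assumption~\ref{ass:objective}.
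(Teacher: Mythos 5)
Your proof is correct and takes essentially the same approach as the paper: both bound the log-determinant by the trace via AM--GM (concavity of $\log$) applied to eigenvalues, and bound the trace by $t C_g^2$ using Assumption~\ref{ass:objective}. The only cosmetic difference is that you apply concavity directly to the eigenvalues of $\Sigma_{t-1}$ and subtract $\log\det\Sigma_0 = d_w\log\lambda$ at the end, whereas the paper factors out $\Sigma_0$ first and works with the factors $1+\xi_k/\lambda$ (also invoking Lemma~\ref{lem:det}, which, as you note, is not actually needed here).
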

Proof of Lemma \ref{lem:det} directly follows definition of $\Sigma_t$ and proof of Lemma \ref{lem:log_det} involves Lemma \ref{lem:det} and inequality of arithmetic and geometric means. Note $C_g$ is a constant coming from Assumption \ref{ass:objective}. We do not claim any novelty in proofs of these two lemmas which replace feature vector in linear bandit \citep{agarwal2021rl} with gradient vectors.
\begin{proof}
Let $\xi_1,...,\xi_{d_w}$ denote eigenvalues of $\sum_{i=0}^{t-1} \nabla f_{x_i}(w_i) \nabla f_{x_i}(w_i)^\top$, then
\begin{equation}
\sum_{k=1}^{d_w} \xi_k = \tr \left(\sum_{i=0}^{t-1} \nabla f_{x_i}(w_i) \nabla f_{x_i}(w_i)^\top \right) = \sum_{i=0}^{t-1} \|\nabla f_{x_i}(w_i)\|^2_2 \leq t C_g^2, \label{eq:t_c_g}
\end{equation}
where the inequality is by Assumption \ref{ass:objective}. By Lemma \ref{lem:det},
\begin{align*}
\log \left(\frac{\det \Sigma_{t-1}}{\det \Sigma_0}\right) &\leq \log \det \left(I + \frac{1}{\lambda} \sum_{i=0}^{t-1} \nabla f_{x_i}(w_i) \nabla f_{x_i}(w_i)^\top \right)\\
&= \log \left(\prod_{k=1}^{d_w} (1 + \xi_k/\lambda)\right)\\
&= d_w \log \left(\prod_{k=1}^{d_w} (1 + \xi_k/\lambda)\right)^{1/{d_w}}\\
&\leq d_w \log \left(\frac{1}{d_w} \sum_{k=1}^{d_w} (1 + \xi_k/\lambda)\right)\\
&\leq d_w \log \left(1+ \frac{t C_g^2}{d_w \lambda}\right),
\end{align*}
where the second inequality is by inequality of arithmetic and geometric means and the last inequality is due to eq. \eqref{eq:t_c_g}.
\end{proof}

\begin{lemma}\label{lem:sum_of_square}
Set $\Sigma_t, \hat{w}_t$ as in eq. \eqref{eq:sigma_t} \& \eqref{eq:inner} and suppose Assumption \ref{ass:parameter_class}, \ref{ass:objective}, \& \ref{ass:loss} hold. Then
\begin{align*}
\sum_{i=0}^{t-1} \nabla f_{x_i}(\hat{w}_i)^\top \Sigma^{-1}_t \nabla f_{x_i}(\hat{w}_i) \leq 2d_w \log\left(1 + \frac{tC^2_g}{d_w \lambda}\right).
\end{align*}
\end{lemma}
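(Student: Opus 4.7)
The plan is to bound each summand by first replacing $\Sigma_t^{-1}$ with the ``local'' inverse $\Sigma_{i+1}^{-1}$, then applying the Sherman--Morrison identity to rewrite the summand in terms of the quantity $u_i^2 = \nabla f_{x_i}(\hat{w}_i)^\top \Sigma_i^{-1} \nabla f_{x_i}(\hat{w}_i)$ that already sits inside the product of Lemma~\ref{lem:det}, and finally using an elementary scalar inequality so that the sum telescopes into $\log(\det \Sigma_t / \det \Sigma_0)$, which Lemma~\ref{lem:log_det} bounds by $d_w \log(1 + tC_g^2/(d_w \lambda))$.

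Concretely, write $v_i := \nabla f_{x_i}(\hat{w}_i)$. From \eqref{eq:sigma_t}, for every $i \leq t-1$ we have $\Sigma_t = \Sigma_{i+1} + \sum_{j=i+1}^{t-1} v_j v_j^\top \succeq \Sigma_{i+1}$, so monotonicity of the matrix inverse on positive-definite matrices gives $v_i^\top \Sigma_t^{-1} v_i \leq v_i^\top \Sigma_{i+1}^{-1} v_i$. Sherman--Morrison applied to $\Sigma_{i+1} = \Sigma_i + v_i v_i^\top$ then yields
\[
v_i^\top \Sigma_{i+1}^{-1} v_i \;=\; \frac{v_i^\top \Sigma_i^{-1} v_i}{1 + v_i^\top \Sigma_i^{-1} v_i} \;=\; \frac{u_i^2}{1 + u_i^2},
\]
matching the $u_i$ convention of Lemma~\ref{lem:det}. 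Using the elementary inequality $x/(1+x) \leq \log(1+x)$ for $x \geq 0$ (the difference vanishes at $0$ and has nonnegative derivative $x/(1+x)^2$), summing over $i = 0, \dots, t-1$, and invoking Lemmas~\ref{lem:det} and~\ref{lem:log_det} in turn gives
\[
\sum_{i=0}^{t-1} v_i^\top \Sigma_t^{-1} v_i \;\leq\; \sum_{i=0}^{t-1} \log(1 + u_i^2) \;=\; \log \frac{\det \Sigma_t}{\det \Sigma_0} \;\leq\; d_w \log\!\left(1 + \frac{t C_g^2}{d_w \lambda}\right),
\]
which is actually stronger than the stated bound by a factor of~$2$. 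The factor $2$ in the lemma is harmless slack, inherited from an alternative (slightly more wasteful) route that bounds $\sum u_i^2$ directly via $x \leq 2 \log(1+x)$ on $[0,1]$ after noting $u_i^2 \leq C_g^2/\lambda \leq 1$ for large enough $\lambda$.

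There is no deep obstacle: this is a variant of the classical ``elliptical potential'' calculation. The only care needed is in aligning conventions---Lemma~\ref{lem:det} writes $u_t = \|\nabla f_{x_t}(\hat{w}_t)\|_{\Sigma_t^{-1}}$ with matching subscripts, but the $u_i$ that actually appears inside the product (and that Sherman--Morrison produces in the step above) uses the \emph{inner} subscript, i.e.\ $\Sigma_i^{-1}$. That is why the monotonicity step (passing from $\Sigma_t^{-1}$ to $\Sigma_{i+1}^{-1}$, which loses nothing because the intervening rank-one updates are positive semidefinite) is essential before Sherman--Morrison can plug in cleanly.
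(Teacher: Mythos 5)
Your proof is correct, and it is a genuine (and slightly sharper) variant of the paper's argument. Both routes are elliptical-potential calculations built on Sherman--Morrison (Lemma \ref{lem:sherman}) plus the determinant identities of Lemmas \ref{lem:det} and \ref{lem:log_det}, but the execution differs. The paper keeps $\Sigma_t^{-1}$ inside every summand: it applies Sherman--Morrison to the decomposition $\Sigma_t = A + \nabla f_{x_i}(\hat{w}_i)\nabla f_{x_i}(\hat{w}_i)^\top$ (peeling the $i$-th rank-one term off the \emph{full} matrix) to conclude each summand lies in $(0,1)$, then invokes $x \le 2\log(1+x)$ on $(0,1)$ --- which is exactly where the factor $2$ in the statement originates --- and finally compares $\sum_i \log\bigl(1 + \nabla f_{x_i}(\hat{w}_i)^\top\Sigma_t^{-1}\nabla f_{x_i}(\hat{w}_i)\bigr)$ against the determinant product (implicitly using the same PSD monotonicity you make explicit). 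You instead first drop from $\Sigma_t^{-1}$ to $\Sigma_{i+1}^{-1}$ by monotonicity, use Sherman--Morrison \emph{exactly} to get $u_i^2/(1+u_i^2)$, and then use $x/(1+x) \le \log(1+x)$, so the sum telescopes to $\log(\det\Sigma_t/\det\Sigma_0)$ without any slack constant; this buys a bound better by a factor of $2$ and makes the term-by-term comparison to Lemma \ref{lem:det} cleaner, since your quantities carry the inner subscript that actually appears in that lemma's product. Two incidental remarks: your conjecture about where the paper's factor $2$ comes from is not quite right (the paper does not need $\lambda \ge C_g^2$; it proves each summand is below $1$ via Sherman--Morrison), and your final appeal to Lemma \ref{lem:log_det} with $\det\Sigma_t$ in place of $\det\Sigma_{t-1}$ inherits the same harmless off-by-one in that lemma's statement that the paper's own proof does --- the proof of Lemma \ref{lem:log_det} in fact bounds $\log\det\bigl(I + \tfrac{1}{\lambda}\sum_{i=0}^{t-1}\nabla f_{x_i}(\hat{w}_i)\nabla f_{x_i}(\hat{w}_i)^\top\bigr)$, which is exactly what you need.
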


A trivial bound of LHS in Lemma \ref{lem:sum_of_square} could be simply $O(t C^2_g/\lambda)$. Lemma \ref{lem:sum_of_square} is important because it saves the upper bound to be $O(\log(t C^2_g/\lambda))$, which allows us to build a feasible parameter uncertainty ball, shown in the next section.

\begin{proof}
First, we prove $\forall i \in \{0, 1,..., t-1\}, 0 < \nabla f_{x_i}(\hat{w}_i)^\top \Sigma^{-1}_t \nabla f_{x_i}(\hat{w}_i) < 1$. Recall the definition of $\Sigma_t$, it's easy to see that $\Sigma_t$ is a positive definite matrix and thus $0 < \nabla f_{x_i}(\hat{w}_i)^\top \Sigma^{-1}_t \nabla f_{x_i}(\hat{w}_i)$. To prove it's smaller than $1$, we need to decompose $\Sigma_t$ and write 
\begin{align*}
&\quad\ \nabla f_{x_i}(\hat{w}_i)^\top \Sigma^{-1}_t \nabla f_{x_i}(\hat{w}_i)\\
&= \nabla f_{x_i}(\hat{w}_i)^\top \left(\lambda I + \sum_{i=0}^{t-1} \nabla f_{x_i}(\hat{w}_i) \nabla f_{x_i}(\hat{w}_i)^\top \right)^{-1} \nabla f_{x_i}(\hat{w}_i)\\
&= \nabla f_{x_i}(\hat{w}_i)^\top \left(\nabla f_{x_i}(\hat{w}_i) \nabla f_{x_i}(\hat{w}_i)^\top - \nabla f_{x_i}(\hat{w}_i) \nabla f_{x_i}(\hat{w}_i)^\top + \lambda I + \sum_{i=0}^{t-1} \nabla f_{x_i}(\hat{w}_i) \nabla f_{x_i}(\hat{w}_i)^\top \right)^{-1} \nabla f_{x_i}(\hat{w}_i).
\end{align*}
Let $A = - \nabla f_{x_i}(\hat{w}_i) \nabla f_{x_i}(\hat{w}_i)^\top + \lambda I + \sum_{i=0}^{t-1} \nabla f_{x_i}(\hat{w}_i) \nabla f_{x_i}(\hat{w}_i)^\top$, and it becomes
\begin{align*}
\nabla f_{x_i}(\hat{w}_i)^\top \Sigma^{-1}_t \nabla f_{x_i}(\hat{w}_i) = \nabla f_{x_i}(\hat{w}_i)^\top (\nabla f_{x_i}(\hat{w}_i) \nabla f_{x_i}(\hat{w}_i)^\top + A)^{-1} \nabla f_{x_i}(\hat{w}_i).
\end{align*}
By applying Sherman-Morrison lemma (Lemma \ref{lem:sherman}), we have
\begin{align*}
\nabla f_{x_i}(\hat{w}_i)^\top \Sigma^{-1}_t \nabla f_{x_i}(\hat{w}_i) &= \nabla f_{x_i}(\hat{w}_i)^\top \left(A^{-1} - \frac{A^{-1} \nabla f_{x_i}(\hat{w}_i) \nabla f_{x_i}(\hat{w}_i)^\top A^{-1}}{1+ \nabla f_{x_i}(\hat{w}_i)^\top A^{-1} \nabla f_{x_i}(\hat{w}_i)} \right)\nabla f_{x_i}(\hat{w}_i)\\
&= \nabla f_{x_i}(\hat{w}_i)^\top A^{-1} \nabla f_{x_i}(\hat{w}_i) - \frac{\nabla f_{x_i}(\hat{w}_i)^\top A^{-1} \nabla f_{x_i}(\hat{w}_i) \nabla f_{x_i}(\hat{w}_i)^\top A^{-1} \nabla f_{x_i}(\hat{w}_i)}{1+ \nabla f_{x_i}(\hat{w}_i)^\top A^{-1} \nabla f_{x_i}(\hat{w}_i)}\\
&= \frac{\nabla f_{x_i}(\hat{w}_i)^\top A^{-1} \nabla f_{x_i}(\hat{w}_i)}{1+ \nabla f_{x_i}(\hat{w}_i)^\top A^{-1} \nabla f_{x_i}(\hat{w}_i)} < 1.
\end{align*}
Next, we use the fact that $\forall x \in (0,1), x \leq 2\log(1+ x)$, and we have
\begin{align*}
\sum_{i=0}^{t-1} \nabla f_{x_i}(\hat{w}_i)^\top \Sigma^{-1}_t \nabla f_{x_i}(\hat{w}_i) &\leq \sum_{i=0}^{t-1} 2\log\left( 1+ \nabla f_{x_i}(\hat{w}_i)^\top \Sigma^{-1}_t \nabla f_{x_i}(\hat{w}_i)\right)\\
&\leq 2\log\left(\frac{\det \Sigma_{t-1}}{\det \Sigma_0} \right)\\
&\leq 2d_w \log\left( 1 + \frac{tC^2_g}{d_w\lambda}\right),
\end{align*}
where the last two inequalities are due to Lemma \ref{lem:det} and \ref{lem:log_det}.
\end{proof}

\subsection{Feasibility of $\mathrm{Ball}_t$}

\begin{lemma}[Restatement of Lemma \ref{lem:feasible_ball}]
Set $\Sigma_t, \hat{w}_t$ as in eq. \eqref{eq:sigma_t}, \eqref{eq:inner}. Set $\beta_t$ as
\begin{align*}
\beta_t = \tilde{O} \left(d_w \sigma^2 + \frac{d^3_w}{\mu^2} + \frac{d^3_w t }{\mu^2 T}\right).
\end{align*}
Suppose Assumption \ref{ass:parameter_class}, \ref{ass:objective}, \& \ref{ass:loss} hold and choose $n= \sqrt{T}, \lambda = C_\lambda \sqrt{T}$. Then $\forall t \in [T]$ in Phase II of Algorithm \ref{alg:go_ucb}, w.p. $>1-\delta$,
\begin{align*}
\|\hat{w}_t - w^*\|^2_{\Sigma_t} &\leq \beta_t.
\end{align*}
\end{lemma}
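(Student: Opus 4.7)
The plan is to derive a closed-form expression for $\hat{w}_t - w^*$, decompose its $\Sigma_t$-weighted norm into three contributions (regularization bias, martingale noise, and nonlinear residual), and close a circular dependence via induction on $t$. Starting from eq. \eqref{eq:inner} and substituting $y_i = f_{x_i}(w^*) + \eta_i$ together with the second-order Taylor expansion $f_{x_i}(w^*) = f_{x_i}(\hat{w}_i) + \nabla f_{x_i}(\hat{w}_i)^\top (w^* - \hat{w}_i) + R_i$, where $|R_i| \leq \frac{C_h}{2} \|\hat{w}_i - w^*\|_2^2$ by Assumption \ref{ass:objective}, the linear-in-$w^*$ factor collapses against $\Sigma_t - \lambda I = \sum_{i=0}^{t-1} \nabla f_{x_i}(\hat{w}_i) \nabla f_{x_i}(\hat{w}_i)^\top$, giving
\begin{align*}
\hat{w}_t - w^* = \lambda \Sigma_t^{-1}(\hat{w}_0 - w^*) + \Sigma_t^{-1} \sum_{i=0}^{t-1} \nabla f_{x_i}(\hat{w}_i)\, \eta_i + \Sigma_t^{-1} \sum_{i=0}^{t-1} \nabla f_{x_i}(\hat{w}_i)\, R_i.
\end{align*}
Writing $\hat{w}_t - w^* = \Sigma_t^{-1} z$ and applying $(a+b+c)^2 \leq 3(a^2+b^2+c^2)$ in the $\Sigma_t^{-1}$ norm reduces the problem to bounding three pieces separately.

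For the regularization bias, using $\Sigma_t^{-1} \preceq (1/\lambda) I$ yields $\lambda^2 \|\hat{w}_0 - w^*\|_{\Sigma_t^{-1}}^2 \leq \lambda \|\hat{w}_0 - w^*\|_2^2$; plugging in Theorem \ref{thm:mle_guarantee} with $n = \sqrt{T}$ and $\lambda = C_\lambda \sqrt{T}$ produces $\tilde{O}(d_w F^2/\mu)$, matching the second term of eq. \eqref{eq:beta_2}. For the martingale noise, each $\nabla f_{x_i}(\hat{w}_i)$ is $\cF_i$-measurable since $\hat{w}_i$ depends only on rounds $<i$, so Lemma \ref{lem:self_norm} combined with Lemma \ref{lem:log_det} yields $\tilde{O}(d_w \sigma^2)$, matching the first term of eq. \eqref{eq:beta_2}. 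For the residual piece, the triangle inequality and Cauchy--Schwarz followed by Lemma \ref{lem:sum_of_square} give
\begin{align*}
\left\|\sum_{i=0}^{t-1} \nabla f_{x_i}(\hat{w}_i)\, R_i\right\|_{\Sigma_t^{-1}}^2 \leq \Bigl(\sum_{i=0}^{t-1} R_i^2\Bigr) \cdot 2 d_w \log\Bigl(1 + \frac{t C_g^2}{d_w \lambda}\Bigr),
\end{align*}
so everything hinges on controlling $\sum_i R_i^2 \leq \frac{t C_h^2}{4} \max_{i<t} \|\hat{w}_i - w^*\|_2^4$.

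The main obstacle is this circular dependence: bounding $R_i$ requires $\ell_2$ control of every previous $\hat{w}_i$, which is essentially the conclusion the lemma wants. I will resolve it by induction on $t$ with the strengthened hypothesis $\|\hat{w}_i - w^*\|_2^2 \leq \tilde{C}/n$ for all $i < t$, where $\tilde{C} = O(d_w F^2/\mu)$ is calibrated against the Phase I bound at $i=0$. Under this hypothesis the residual contribution becomes $\tilde{O}(d_w^3 F^4 t/(\mu^2 T))$, reproducing the third (time-growing) term of eq. \eqref{eq:beta_2}, and summing the three pieces gives $\|\hat{w}_t - w^*\|_{\Sigma_t}^2 \leq \beta_t$. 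To close the induction I convert back via $\|\hat{w}_t - w^*\|_2^2 \leq \beta_t/\lambda$; since $\beta_T = \tilde{O}(d_w^3 F^4/\mu^2)$ and $\lambda = C_\lambda \sqrt{T}$, taking $C_\lambda$ as a $T$-independent constant of order $d_w^2 F^2/\mu$ ensures $\beta_t/\lambda \leq \tilde{C}/n$, completing the inductive step. The base case $t=0$ is immediate from Theorem \ref{thm:mle_guarantee}, and a union bound over the Phase I event and the martingale step across $t \in [T]$ delivers the global $1-\delta$ probability.
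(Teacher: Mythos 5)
Your proposal is correct and follows essentially the same route as the paper's proof: the same three-term decomposition of $\hat{w}_t - w^*$ (regularization bias, self-normalized martingale noise, second-order Taylor residual), the same supporting lemmas (Lemma \ref{lem:self_norm}, \ref{lem:log_det}, \ref{lem:sum_of_square} plus Theorem \ref{thm:mle_guarantee}), and the same induction keeping $\|\hat{w}_i - w^*\|_2^2 = \tilde{O}(1/n)$ uniformly over $i$. The only cosmetic difference is how the induction is closed: you convert the $\Sigma_t$-norm bound back to $\ell_2$ via $\|\hat{w}_t - w^*\|_2^2 \leq \beta_t/\lambda$ and calibrate $C_\lambda$ of order $d_w^2 F^2/\mu$, whereas the paper runs the $\ell_2$-norm recursion directly (picking up the extra $1/\lambda$ from $\Sigma_t^{-1} \preceq I/\lambda$) and expresses the same constant-feasibility requirement as a discriminant condition on $\tilde{C}$ under $\lambda = C_\lambda\sqrt{T}$, $n = \sqrt{T}$.
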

\begin{proof}
The proof has three steps. First we obtain the closed form solution of $\hat{w}_t$. Next we derive the upper bound of $\|\hat{w}_i - w^*\|^2_2$. Finally we use it to prove that the upper bound of $\|\hat{w}_t - w^*\|^2_{\Sigma_t}$ matches our choice of $\beta_t$.

\textbf{Step 1: Closed form solution of $\hat{w}_t$.}
The optimal criterion for the objective function in eq. \eqref{eq:opt_inner} is
\begin{align*}
0= \lambda (\hat{w}_t - \hat{w}_0) + \sum_{i=0}^{t-1} ((\hat{w}_t - \hat{w}_i)^\top \nabla f_{x_i}(\hat{w}_i) + f_{x_i}(\hat{w}_i) - y_i) \nabla f_{x_i}(\hat{w}_i).
\end{align*}
Rearrange the equation and we have
\begin{align*}
\lambda (\hat{w}_t- \hat{w}_0) + \sum_{i=0}^{t-1} (\hat{w}_t - \hat{w}_i)^\top \nabla f_{x_i}(\hat{w}_i) \nabla f_{x_i}(\hat{w}_i) &= \sum_{i=0}^{t-1} (y_i - f_{x_i}(\hat{w}_i) ) \nabla f_{x_i}(\hat{w}_i),\\
\lambda (\hat{w}_t- \hat{w}_0) + \sum_{i=0}^{t-1} (\hat{w}_t - \hat{w}_i)^\top \nabla f_{x_i}(\hat{w}_i) \nabla f_{x_i}(\hat{w}_i) &= \sum_{i=0}^{t-1} (y_i - f_{x_i}(w^*) + f_{x_i}(w^*) - f_{x_i}(\hat{w}_i) ) \nabla f_{x_i}(\hat{w}_i),\\
\lambda (\hat{w}_t - \hat{w}_0) + \sum_{i=0}^{t-1} \hat{w}^\top_t \nabla f_{x_i}(\hat{w}_i) \nabla f_{x_i}(\hat{w}_i) &= \sum_{i=0}^{t-1} (\hat{w}^\top_i \nabla f_{x_i}(\hat{w}_i) +\eta_i + f_{x_i}(w^*) - f_{x_i}(\hat{w}_i) ) \nabla f_{x_i}(\hat{w}_i),\\
\hat{w}_t\left(\lambda I + \sum_{i=1}^{t-1} \nabla f_{x_i}(\hat{w}_i) \nabla f_{x_i}(\hat{w}_i)^\top \right) - \lambda \hat{w}_0 &= \sum_{i=0}^{t-1} (\hat{w}^\top_i \nabla f_{x_i}(\hat{w}_i) + \eta_i + f_{x_i}(w^*) - f_{x_i}(\hat{w}_i))\nabla f_{x_i}(\hat{w}_i),\\
\hat{w}_t \Sigma_t &= \lambda \hat{w}_0 + \sum_{i=0}^{t-1} (\hat{w}^\top_i \nabla f_{x_i}(\hat{w}_i) + \eta_i + f_{x_i}(w^*) - f_{x_i}(\hat{w}_i))\nabla f_{x_i}(\hat{w}_i),
\end{align*}
where the second line is by removing and adding back $f_{x_i}(w^*)$, the third line is due to definition of observation noise $\eta$ and the last line is by our choice of $\Sigma_t$ (eq. \eqref{eq:sigma_t}). Now we have the closed form solution of $\hat{w}_t$:
\begin{align*}
\hat{w}_t = \Sigma^{-1}_t \left( \lambda \hat{w}_0 + \sum_{i=0}^{t-1} (\hat{w}^\top_i \nabla f_{x_i}(\hat{w}_i) + \eta_i + f_{x_i}(w^*) - f_{x_i}(\hat{w}_i))\nabla f_{x_i}(\hat{w}_i)\right).
\end{align*}
Further, $\hat{w}_t - w^*$ can be written as
\begin{align}
\hat{w}_t - w^* &= \Sigma^{-1}_t \left(\sum_{i=0}^{t-1} \nabla f_{x_i}(\hat{w}_i) (\nabla f_{x_i}(\hat{w}_i)^\top \hat{w}_i +\eta_i + f_{x_i}(w^*) - f_{x_i}(\hat{w}_i)) \right) + \lambda \Sigma^{-1}_t \hat{w}_0 - \Sigma^{-1}_t \Sigma_t w^* \nonumber \\
&= \Sigma^{-1}_t \left(\sum_{i=0}^{t-1} \nabla f_{x_i}(\hat{w}_i) (\nabla f_{x_i}(\hat{w}_i)^\top \hat{w}_i +\eta_i + f_{x_i}(w^*) - f_{x_i}(\hat{w}_i)) \right) + \lambda \Sigma^{-1}_t (\hat{w}_0 - w^*)\nonumber\\
&\qquad - \Sigma^{-1}_t \left(\sum_{i=0}^{t-1} \nabla f_{x_i}(\hat{w}_i) \nabla f_{x_i}(\hat{w}_i)^\top\right) w^* \nonumber \\
&= \Sigma^{-1}_t \left(\sum_{i=0}^{t-1} \nabla f_{x_i}(\hat{w}_i) (\nabla f_{x_i}(\hat{w}_i)^\top (\hat{w}_i - w^*) +\eta_i + f_{x_i}(w^*) - f_{x_i}(\hat{w}_i)) \right) + \lambda \Sigma^{-1}_t (\hat{w}_0- w^*) \nonumber \\
&= \Sigma^{-1}_t \left(\sum_{i=0}^{t-1} \nabla f_{x_i}(\hat{w}_i) \half \|w^* - \hat{w}_i\|^2_{\nabla^2 f_{x_i}(\tilde{w})}\right) + \Sigma^{-1}_t \left(\sum_{i=0}^{t-1} \nabla f_{x_i}(\hat{w}_i) \eta_i \right) +\lambda \Sigma^{-1}_t (\hat{w}_0 - w^*),\label{eq:w_t_w_star}
\end{align}
where the second line is again by our choice of $\Sigma_t$ and the last equation is by the second order Taylor's theorem of $f_{x_i}(w^*)$ at $\hat{w}_i$ where $\tilde{w}$ lies between $w^*$ and $\hat{w}_i$. 

\textbf{Step 2: Upper bound of $\|\hat{w}_i - w^*\|^2_2$.} Note eq. \eqref{eq:w_t_w_star} holds $\forall i \in [T]$ because all $\hat{w}_i$ are obtained through the same optimization problem, which means
\begin{align*}
\hat{w}_i - w^* = \Sigma^{-1}_i \left(\sum_{\rho=0}^{i-1} \nabla f_{x_\rho}(\hat{w}_\rho) \half \|w^* - \hat{w}_\rho\|^2_{\nabla^2 f_{x_\rho}(\tilde{w})}\right) + \Sigma^{-1}_i \left(\sum_{\rho=0}^{i-1} \nabla f_{x_\rho}(\hat{w}_\rho) \eta_\rho \right) +\lambda \Sigma^{-1}_i (\hat{w}_0 - w^*).
\end{align*}
By inequality $(a+b+c)^2 \leq 4a^2 + 4b^2 + 4c^2$ and definition of $\Sigma_i$, we take the square of both sides and get
\begin{align}
\|\hat{w}_i - w^*\|^2_2 &\leq \frac{4}{\lambda}\left\|\sum_{\rho=0}^{i-1} \nabla f_{x_\rho}(\hat{w}_\rho) \eta_\rho \right\|^2_{\Sigma^{-1}_i} + 4\|\hat{w}_0 - w^*\|^2_2 + \frac{1}{\lambda}\left\|\sum_{\rho=0}^{i-1} \nabla f_{x_\rho}(\hat{w}_\rho) \|w^* - \hat{w}_\rho\|^2_{\nabla^2 f_{x_\rho}(\tilde{w}_\rho)} \right\|^2_{\Sigma^{-1}_i}.\label{eq:w_i}
\end{align}

Now we use induction to prove the convergence rate of $\|\hat{w}_i - w^*\|^2_2, \forall i \in [T]$. Recall at the very beginning of Phase II, by Theorem \ref{thm:mle_guarantee} (check that the condition on $n$ is satisfied due to our condition on $T$ and the choice of $n = \sqrt{T}$),  with probability $> 1- \delta/2$,
\begin{align*}
\|\hat{w}_0 - w^*\|^2_2 \leq \frac{C d_w F^2 \iota}{\mu n}.
\end{align*}

To derive a claim based on induction, formally, we suppose at round $i$, there exists some universal constant $\tilde{C}$ such that with probability $> 1- \delta/2$,
\begin{align*}
\|\hat{w}_i - w^*\|^2_2 &\leq \frac{\tilde{C} d_w F^2 \iota}{ \mu n}.
\end{align*}
Our task is to prove that at round $i+1$ with probability $> 1- \delta/2$,
\begin{align*}
\|\hat{w}_{i+1} - w^*\|^2_2 &\leq \frac{\tilde{C} d_w F^2 \iota}{ \mu n}.
\end{align*}
Note $\tilde{C}$ is for induction purpose, which can be different from $C$.

From eq. \eqref{eq:w_i}, at round $i+1$ we can write
\begin{align*}
\|\hat{w}_{i+1} - w^*\|^2_2 &\leq \frac{4\sigma^2}{\lambda} \log\left(\frac{\det(\Sigma_i)\det(\Sigma_0)^{-1}}{\delta^2_i} \right) +  \frac{4Cd_w F^2 \iota}{\mu n } + \frac{1}{\lambda}\left\|\sum_{\rho=0}^{i} \nabla f_{x_\rho}(\hat{w}_\rho) \|w^* - \hat{w}_\rho\|^2_{\nabla^2 f_{x_\rho}(\tilde{w}_\rho)} \right\|^2_{\Sigma^{-1}_{i+1}}\\
&\leq \frac{4\sigma^2}{\lambda} \left(d_w \log \left(1+\frac{i C_g^2}{d_w \lambda} \right) + \log\left(\frac{\pi^2 i^2}{3\delta}\right) \right) + \frac{4Cd_w F^2 \iota}{\mu n } \nonumber\\
&\qquad + \frac{1}{\lambda}\left\|\sum_{\rho=0}^{i} \nabla f_{x_\rho}(\hat{w}_\rho) \|w^* - \hat{w}_\rho\|^2_{\nabla^2 f_{x_\rho}(\tilde{w}_\rho)} \right\|^2_{\Sigma^{-1}_{i+1}}\\
&\leq \frac{4d_w \sigma^2 \iota'}{\lambda} + \frac{4Cd_w F^2 \iota}{\mu n } + \frac{1}{\lambda}\left\|\sum_{\rho=0}^{i} \nabla f_{x_\rho}(\hat{w}_\rho) \|w^* - \hat{w}_\rho\|^2_{\nabla^2 f_{x_\rho}(\tilde{w}_\rho)} \right\|^2_{\Sigma^{-1}_{i+1}},
\end{align*}
where the first inequality is due to self-normalized bound for vector-valued martingales (Lemma \ref{lem:self_norm} in Appendix \ref{sec:auxiliary}) and Theorem \ref{thm:mle_guarantee}, the second inequality is by Lemma \ref{lem:log_det} and our choice of $\delta_i= 3\delta/(\pi^2 i^2)$, and the last inequality is by defining $\iota'$ as the logarithmic term depending on $i, d_w, C_g, 1/\lambda, 2/\delta$ (with probability $> 1- \delta/2$). The choice of $\delta_i$ guarantees the total failure probability over $t$ rounds is no larger than $\delta/2$. Now we use our assumption $\|\hat{w}_i - w^*\|^2_2 \leq \frac{\tilde{C} d_w F^2 \iota}{\mu n}$ to bound the last term.
\begin{align*}
\|\hat{w}_{i+1} - w^*\|^2_2 &\leq \frac{4d_w \sigma^2 \iota'}{\lambda} + \frac{4Cd_w F^2 \iota}{\mu n } + \frac{\tilde{C}^2 C^2_h d_w^2 F^4 \iota^2}{ \mu^2 \lambda n^2}\left(\sum_{\rho=0}^i \sqrt{\nabla f_{x_\rho}(\hat{w}_\rho)^\top \Sigma^{-1}_{i+1} \nabla f_{x_\rho}(\hat{w}_\rho)} \right)^2\\
&\leq \frac{4d_w \sigma^2 \iota'}{\lambda} + \frac{4Cd_w F^2 \iota}{ \mu n } + \frac{\tilde{C}^2 C^2_h d_w^2 F^4 \iota^2}{ \mu^2 \lambda n^2}\left(\sum_{\rho=0}^i 1\right)\left(\sum_{\rho=0}^i \nabla f_{x_\rho}(\hat{w}_\rho)^\top \Sigma^{-1}_{i+1} \nabla f_{x_\rho}(\hat{w}_\rho) \right)\\
&\leq \frac{4d_w \sigma^2 \iota'}{\lambda} + \frac{4Cd_w F^2 \iota}{ \mu n } + \frac{\tilde{C}^2 C^2_h d^3_w F^4 i \iota{''} \iota^2}{ \mu^2 \lambda n^2},
\end{align*}
where the first inequality is due to smoothness of loss function in Assumption \ref{ass:loss} and triangular inequality, the second inequality is by Cauchy-Schwarz inequality, and the last inequality is because of Lemma \ref{lem:sum_of_square} and defining $\iota{''}$ as logarithmic term depending on $i, d_w, C_g, 1/\lambda$.

What we need is that there exists some universal constant $\tilde{C}$ such that
\begin{align*}
\frac{4d_w \sigma^2 \iota'}{\lambda} + \frac{4C d_w F^2 \iota}{\mu n} + \frac{\tilde{C}^2 C^2_h d^3_w F^4 i \iota^2 \iota{''}}{\lambda \mu^2 n^2} \leq \frac{\tilde{C}d_w F^2 \iota}{\mu n}.
\end{align*}
Note the LHS is monotonically increasing w.r.t $i$ so the inequality must hold when $i=T$, i.e.,
\begin{align*}
\frac{4d_w \sigma^2 \iota'}{\lambda} + \frac{4C d_w F^2 \iota}{\mu n} + \frac{\tilde{C}^2 C^2_h d^3_w F^4 T \iota^2 \iota{''}}{\lambda \mu^2 n^2} \leq \frac{\tilde{C}d_w F^2 \iota}{\mu n}.
\end{align*}
Recall the range of our function is $[-F, F]$, given any distribution, the variance $\sigma^2$ can always be upper bounded by $ F^2/4$, so we just need to show that
\begin{align*}
\frac{d_w F^2 \iota'}{\lambda} + \frac{4C d_w F^2 \iota}{\mu n} + \frac{\tilde{C}^2 C^2_h d^3_w F^4 T \iota^2 \iota{''}}{\lambda \mu^2 n^2} &\leq \frac{\tilde{C}d_w F^2 \iota}{\mu n},\\
\mu^2 n^2 \iota' + 4 \lambda \mu n C  \iota + \tilde{C}^2 C^2_h d^2_w F^2 T \iota^2 \iota{''} &\leq \lambda \mu n \tilde{C} \iota,\\
\tilde{C}^2 C^2_h d^2_w F^2 T \iota^2 \iota{''} - \tilde{C}\lambda \mu n \iota + \mu^2 n^2 \iota' + 4\lambda \mu n C \iota &\leq 0,
\end{align*}
where the second and third lines are by rearrangement. A feasible solution on $\tilde{C}$ requires
\begin{align}
\lambda^2 \mu^2 n^2 \iota^2 - 4C^2_h d^2_w F^2 T \iota^2 \iota{''} (\mu^2 n^2 \iota' + 4\lambda \mu n C \iota) &\geq 0, \nonumber\\
\lambda^2 \mu^2 n - 4C^2_h d^2_w F^2 T\iota{''} (\mu^2 n \iota' + 4\lambda \mu C \iota) &\geq 0,\label{eq:b^2-4ac}
\end{align}
where the second line is by rearrangement. 
Substitute our choices of $\lambda = C_\lambda \sqrt{T}, n=\sqrt{T}$ and solve the quadratic inequality for $C_\lambda$; we get that it suffices to choose
\begin{align}
C_\lambda = 4 \sqrt{C^2_h d^2_w F^2 \iota{'} \iota{''} + \frac{16 C^2 C^4_h d^4_w F^4 \iota^2 \iota{''}^2}{\mu^2}} = \tilde{O}\left(\frac{d_w^2}{\mu}\right),\label{eq:c_lambda}
\end{align}
with assumption $d_w > \mu$. Check that $C_\lambda$ depends only logarithmically on $T$ and that it ensures eq. \eqref{eq:b^2-4ac} holds, therefore certifying that a universal constant $\tilde{C}$ exists. Therefore, by induction, we prove that $\forall i \in [T]$ there exists a universal constant $\tilde{C}$ such that with probability $> 1- \delta/2$,
\begin{align*}
\|\hat{w}_i - w^*\|^2_2 \leq \frac{\tilde{C}d_w F^2\iota}{\mu n}.
\end{align*}
With this result, now we are ready to move to \textbf{Step 3}. 

\textbf{Step 3: Upper bound of $\|\hat{w}_t - w^*\|^2_{\Sigma_t}$.}
Multiply both sides of eq. \eqref{eq:w_t_w_star} by $\Sigma^\half_t$ and we have
\begin{align*}
\Sigma^{\half}_t(\hat{w}_t - w^*) &\leq \half \Sigma^{-\half}_t \left(\sum_{i=0}^{t-1} \nabla f_{x_i}(\hat{w}_i)\|w^* - \hat{w}_i\|^2_{\nabla^2 f_{x_i}(\tilde{w})}\right) + \Sigma^{-\half}_t \left(\sum_{i=0}^{t-1} \nabla f_{x_i}(\hat{w}_i) \eta_i \right) + \lambda \Sigma^{-\half}_t (\hat{w}_0 - w^*).
\end{align*}
Take square of both sides and by inequality $(a+b+c)^2\leq 4a^2 + 4b^2 + 4c^2$ we obtain
\begin{align*}
\|\hat{w}_t - w^*\|^2_{\Sigma_t} &\leq 4\left\| \sum_{i=0}^{t-1} \nabla f_{x_i}(\hat{w}_i) \eta_i\right\|^2_{\Sigma_t^{-1}} + 4\lambda^2 \|\hat{w}_0 - w^*\|^2_{\Sigma^{-1}_t} + \left\|\sum_{i=0}^{t-1} \nabla f_{x_i}(\hat{w}_i)\|w^* - \hat{w}_i\|^2_{\nabla^2 f_{x_i}(\tilde{w})}\right\|^2_{\Sigma^{-1}_t}.
\end{align*}
The remaining proof closely follows \textbf{Step 2}, i.e.,
\begin{align*}
\|\hat{w}_t - w^*\|^2_{\Sigma_t} &\leq 4d_w \sigma^2 \iota' + \frac{4\lambda C d_w F^2 \iota}{\mu n} + \frac{\tilde{C}^2 C^2_h d_w^2 F^4 \iota^2}{\mu^2  n^2}\left(\sum_{i=0}^{t-1} \sqrt{\nabla f_{x_i}(\hat{w}_i)^\top \Sigma^{-1}_{t} \nabla f_{x_i}(\hat{w}_i)} \right)^2\\
&\leq 4d_w \sigma^2 \iota' + \frac{4\lambda C d_w F^2 \iota}{\mu n } + \frac{\tilde{C}^2 C^2_h d_w^2 F^4 \iota^2}{\mu^2  n^2}\left(\sum_{i=0}^{t-1} 1\right)\left(\sum_{i=0}^{t-1} \nabla f_{x_i}(\hat{w}_i)^\top \Sigma^{-1}_t \nabla f_{x_i}(\hat{w}_i) \right)\\
&\leq 4d_w \sigma^2 \iota' + \frac{4\lambda Cd_w F^2 \iota}{\mu n } + \frac{\tilde{C}^2 C^2_h d_w^3 F^4 t \iota{''} \iota^2}{\mu^2  n^2}\\
&\leq \tilde{O}\left(d_w \sigma^2 + \frac{d^3_w}{\mu^2} + \frac{d^3_w t }{\mu^2 T}\right),
\end{align*}
where the last inequality is by our choices of $\lambda=C_\lambda \sqrt{T}, n = \sqrt{T}$. Therefore, our choice of
\begin{align*}
\beta_t &= \tilde{O}\left(d_w \sigma^2 + \frac{d^3_w}{\mu^2} + \frac{d^3_w t }{\mu^2 T}\right)
\end{align*}
guarantees that $w^*$ is always contained in $\mathrm{Ball}_t$ with probability $1- \delta$.
\end{proof}

\subsection{Regret Analysis}\label{sec:regret}

\begin{lemma}[Restatement of Lemma \ref{lem:instant_regret}]
Set $\Sigma_t, \hat{w}_t, \beta_t$ as in eq. \eqref{eq:sigma_t}, \eqref{eq:inner}, \& \eqref{eq:beta_t} and suppose Assumption \ref{ass:parameter_class}, \ref{ass:objective}, \& \ref{ass:loss} hold, then with probability $> 1- \delta$,
$w^*$ is contained in $\mathrm{Ball}_t$.
Define $u_t = \|\nabla f_{x_t}(\hat{w}_t)\|_{\Sigma^{-1}_t}$, then $\forall t \in [T]$ in Phase II of Algorithm \ref{alg:go_ucb},
\begin{align*}
r_t \leq 2\sqrt{\beta_t}u_t + \frac{2\beta_t C_h}{\lambda}.
\end{align*}
\end{lemma}
\begin{proof}
By definition of instantaneous regret $r_t$,
\begin{align*}
r_t &= f_{x^*}(w^*) - f_{x_t}(w^*).
\end{align*}
Recall the selection process of $x_t$ and define $\tilde{w} = \argmax_{w \in \mathrm{Ball}_t} f_{x_t}(w)$,
\begin{align*}
r_t \leq f_{x_t}(\tilde{w}) - f_{x_t}(w^*)= (\tilde{w} - w^*)^\top \nabla f_{x_t}(\dot{w}),
\end{align*}
where the equation is by first order Taylor's theorem and $\dot{w}$ lies between $\tilde{w}$ and $w^*$ which means $\dot{w}$ is guaranteed to be in $\mathrm{Ball}_t$ since $\mathrm{Ball}_t$ is convex. Then, by adding and removing terms,
\begin{align*}
r_t &= (\tilde{w} - \hat{w}_t + \hat{w}_t - w^*)^\top (\nabla f_{x_t}(\hat{w}_t) - \nabla f_{x_t}(\hat{w}_t) + \nabla f_{x_t}(\dot{w}))\\
&\leq \|\tilde{w}-\hat{w}_t\|_{\Sigma_t} \|\nabla f_{x_t}(\hat{w}_t)\|_{\Sigma_t^{-1}} + \|\hat{w}_t - w^*\|_{\Sigma_t} \|\nabla f_{x_t}(\hat{w}_t)\|_{\Sigma_t^{-1}} + (\tilde{w} - \hat{w}_t)^\top (\nabla f_{x_t}(\dot{w}_t) - \nabla f_{x_t}(\hat{w}_t))\nonumber\\
&\qquad + (\hat{w}_t - w^*)^\top (\nabla f_{x_t}(\dot{w}) - \nabla f_{x_t}(\hat{w}_t)),
\end{align*}
where the last inequality is due to Holder's inequality. By definitions of $\beta_t$ in $\mathrm{Ball}_t$ and $u_t = \|\nabla f_{x_t}(\hat{w}_t)\|_{\Sigma^{-1}_t}$,
\begin{align*}
r_t &\leq 2\sqrt{\beta_t} u_t + (\tilde{w} - \hat{w}_t)^\top (\nabla f_{x_t}(\dot{w}) - \nabla f_{x_t}(\hat{w}_t)) + (\hat{w}_t - w^*)^\top (\nabla f_{x_t}(\dot{w}) - \nabla f_{x_t}(\hat{w}_t)).
\end{align*}
Again by first order Taylor's theorem where $\ddot{w}$ lies between $\dot{w}$ and $\hat{w}$ and thus $\ddot{w}$ lies in $\mathrm{Ball}_t$,
\begin{align*}
r_t &\leq 2\sqrt{\beta_t} u_t + (\tilde{w}-\hat{w}_t)^\top \Sigma^\half_t \Sigma^{-\half}_t \nabla^2 f_{x_t}(\ddot{w}) \Sigma^{-\half}_t \Sigma^\half_t (\dot{w}-\hat{w}_t)  + (\hat{w}_t- w^*)^\top \Sigma^\half_t \Sigma^{-\half}_t \nabla^2 f_{x_t}(\ddot{w}) \Sigma^{-\half}_t \Sigma^\half_t (\dot{w}-\hat{w}_t)\\
&\leq 2\sqrt{\beta_t} u_t + \|(\tilde{w}-\hat{w}_t)^\top \Sigma^\half_t\|_2  \|\Sigma^{-\half}_t \nabla^2 f_{x_t}(\ddot{w}) \Sigma^{-\half}_t\|_\mathrm{op} \|\Sigma^\half_t (\dot{w}-\hat{w}_t)\|_2 \nonumber\\
&\qquad + \|(\hat{w}_t - w^*)^\top \Sigma^\half_t\|_2 \|\Sigma^{-\half}_t \nabla^2 f_{x_t}(\ddot{w}) \Sigma^{-\half}_t\|_\mathrm{op} \|\Sigma^\half_t (\dot{w}-\hat{w}_t)\|_2\\
&\leq 2\sqrt{\beta_t}u_t + \frac{2\beta_t C_h}{\lambda},
\end{align*}
where the second inequality is by Holder's inequality and the last inequality is due to definition of $\beta_t$ in $\mathrm{Ball}_t$, Assumption \ref{ass:objective}, and our choice of $\Sigma_t$.
\end{proof}

\begin{lemma}[Restatement of Lemma \ref{lem:sos_instant_regret}]
Set $\Sigma_t, \hat{w}_t, \beta_t$ as in eq. \eqref{eq:sigma_t}, \eqref{eq:inner}, \& \eqref{eq:beta_t} and suppose Assumption \ref{ass:parameter_class}, \ref{ass:objective}, \& \ref{ass:loss} hold, then with probability $> 1- \delta$,
$w^*$ is contained in $\mathrm{Ball}_t$ and $\forall t \in [T]$ in Phase II of Algorithm \ref{alg:go_ucb},
\begin{align*}
\sum_{t=1}^T r^2_t \leq 16\beta_T d_w \log \left(1 + \frac{TC_g^2}{d_w \lambda}\right) + \frac{8\beta^2_T C^2_h T}{\lambda^2}.
\end{align*}
\end{lemma}
\begin{proof}
By Lemma \ref{lem:instant_regret} and inequality $(a+b)^2 \leq 2a^2 + 2b^2$,
\begin{align*}
\sum_{t=1}^T r^2_t &\leq \sum_{t=1}^T 8\beta_t u^2_t + \frac{8 \beta^2_t C^2_h}{\lambda^2}\\
&\leq 8\beta_T \sum_{i=1}^T u^2_t + \frac{8\beta^2_T C^2_h T}{\lambda^2}\\
&\leq 16\beta_T d_w \log \left(1 + \frac{TC_g^2}{d_w \lambda}\right) + \frac{8\beta^2_T C^2_h T}{\lambda^2},
\end{align*}
where the second inequality is due to $\beta_t$ is increasing in $t$ and the last inequality is by Lemma \ref{lem:sum_of_square}.
\end{proof}

By putting everything together, we are ready to prove the main cumulative regret theorem.

\begin{proof}[Proof of Theorem \ref{thm:cr}]
By definition of cumulative regret including both Phase I and II,
\begin{align*}
R_{\sqrt{T}+T} &= \sum_{j=1}^{\sqrt{T}} r_j + \sum_{t=1}^T r_t\\
&\leq 2\sqrt{T}F + \sqrt{T\sum_{t=1}^T r^2_t}\\
&\leq 2\sqrt{T}F + \sqrt{16T\beta_T d_w \log \left(1 + \frac{T
C_g^2}{d_w \lambda}\right) + \frac{8T^2\beta^2_T C^2_h}{\lambda^2}}\\
&\leq \tilde{O}\left(\sqrt{T} F + \sqrt{T\beta_T d_w + \frac{T^2 \beta^2_T }{\lambda^2}}\right),
\end{align*}
where the first inequality is due to function range and Cauchy-Schwarz inequality, the second inequality is by Lemma \ref{lem:sos_instant_regret} and the last inequality is obtained by setting $\lambda=C_\lambda \sqrt{T}, n=\sqrt{T}$ as required by Lemma \ref{lem:feasible_ball} where $C_\lambda$ is in eq. \eqref{eq:c_lambda}.

Recall that $\beta_t$ is defined in eq. \eqref{eq:beta_t}, so
\begin{align*}
\beta_T &= \tilde{O}\left(\frac{d^3_w}{\mu^2} \right).
\end{align*}
The proof completes by plugging in upper bound of $\beta_T$.
\end{proof}

\section{Additional Experimental Details}
In addition to Experiments section in main paper, in this section, we show details of algorithm implementation and and real-world experiments.

\subsection{Implementation of GO-UCB}\label{sec:imp_goucb}
Noise parameter $\sigma=0.01$. Regression oracle in GO-UCB is approximated by stochastic gradient descent algorithm on our two linear layer neural network model with mean squared error loss, $2000$ iterations and $10^{-11}$ learning rate. Exactly solving optimization problem in Step 5 of Phase II may not be computationally tractable, so we use iterative gradient ascent algorithm over $x$ and $w$ with $2000$ iterations and $10^{-4}$ learning rate. $\beta_t$ is set as $d^3_w F^4 t/T$. $\lambda$ is set as $\sqrt{T}\log^2 T$.

\subsection{Real-world Experiments}\label{sec:real_detail}

Hyperparameters can be continuous or categorical, however, in order to fairly compare GO-UCB with Bayesian optimization methods, in all hyperparameter tuning tasks, we set function domain to be $[0, 10]^{d_x}$, a continuous domain. If a hyperparameter is categorical, we allocate equal length domain for each hyperparameter. For example, the seventh hyperparameter of random forest is a bool value, True or False and we define $[0, 5)$ as True and $[5,10]$ as False. If a hyperparameter is continuous, we set linear mapping from the hyperparameter domain to $[0,10]$. For example, the sixth hyperparameter of multi-layer perceptron is a float value in $(0,1)$ thus we multiply it by $10$ and map it to $(0, 10)$.

\textbf{Hyperparameters in hyperparameter tuning tasks.} We list hyperparameters in all three tasks as follows.

Classification with Random Forest.
\begin{enumerate}
    \item Number of trees in the forest, (integer, [20, 200]).
    \item Criterion, (string, ``gini'', ``entropy'', or ``logloss'').
    \item Maximum depth of the tree, (integer, [1, 10]).
    \item Minimum number of samples required to split an internal node, (integer, [2, 10]).
    \item Minimum number of samples required to be at a leaf node, (integer, [1, 10]).
    \item Maximum number of features to consider when looking for the best split, (string, ``sqrt'' or ``log2'').
\item Bootstrap, (bool, True or False).
\end{enumerate}
Classification with Multi-Layer Perceptron.
\begin{enumerate}
    \item Activation function (string, ``identity'', ``logistic'', ``tanh'', or ``relu'').
    \item Strength of the L2 regularization term, (float, [$10^{-6}, 10^{-2}$]).
    \item Initial learning rate used, (float, [$10^{-6}, 10^{-2}$]).
    \item Maximum number of iterations, (integer, [100, 300]).
    \item Whether to shuffle samples in each iteration, (bool, True or False).
    \item Exponential decay rate for estimates of first moment vector, (float, (0, 1)).
    \item Exponential decay rate for estimates of second moment vector (float, (0, 1)).
    \item Maximum number of epochs to not meet tolerance improvement, (integer, [1, 10]).
\end{enumerate}
Classification with Gradient Boosting.
\begin{enumerate}
    \item Loss, (string, ``logloss'' or ``exponential'').
\item Learning rate, (float, (0, 1)).
\item Number of estimators, (integer, [20, 200]).
\item Fraction of samples to be used for fitting the individual base learners, (float, (0, 1)).
\item Function to measure the quality of a split, (string, ``friedman mse'' or ``squared error'').
\item Minimum number of samples required to split an internal node, (integer, [2, 10]).
\item Minimum number of samples required to be at a leaf node, (integer, [1, 10]).
\item Minimum weighted fraction of the sum total of weights, (float, (0, 0.5)).
\item Maximum depth of the individual regression estimators, (integer, [1, 10]).
\item Number of features to consider when looking for the best split, (float, ``sqrt'' or ``log2'').
\item Maximum number of leaf nodes in best-first fashion, (integer, [2, 10]).
\end{enumerate}

\begin{figure*}[!htbp]
	\centering
	\begin{minipage}{0.32\linewidth}\centering
		\includegraphics[width=\textwidth]{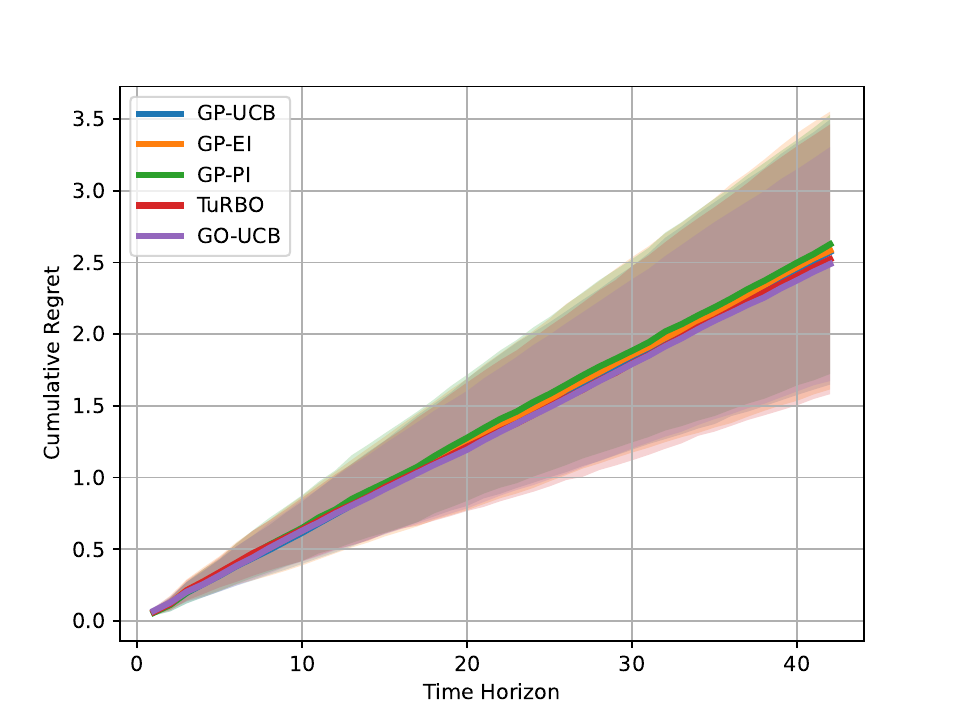}
		(a) Random forest ($d_x = 7$)
	\end{minipage}
	\begin{minipage}{0.32\linewidth}\centering
		\includegraphics[width=\textwidth]{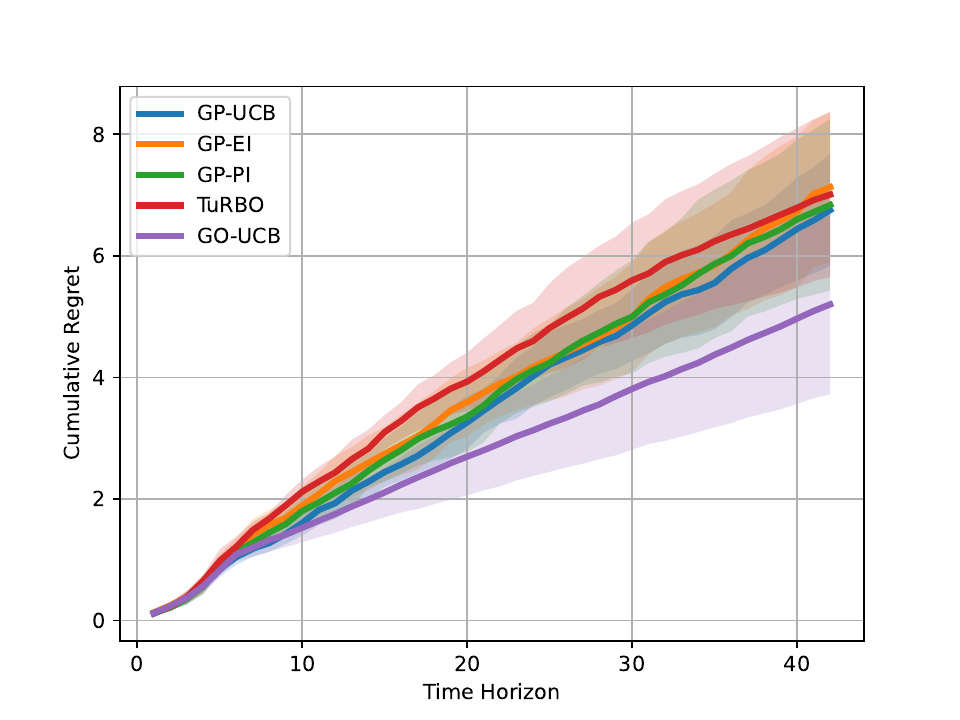}
		(b) Multi-layer perceptron ($d_x = 8$)
	\end{minipage}
	\begin{minipage}{0.32\linewidth}\centering
		\includegraphics[width=\textwidth]{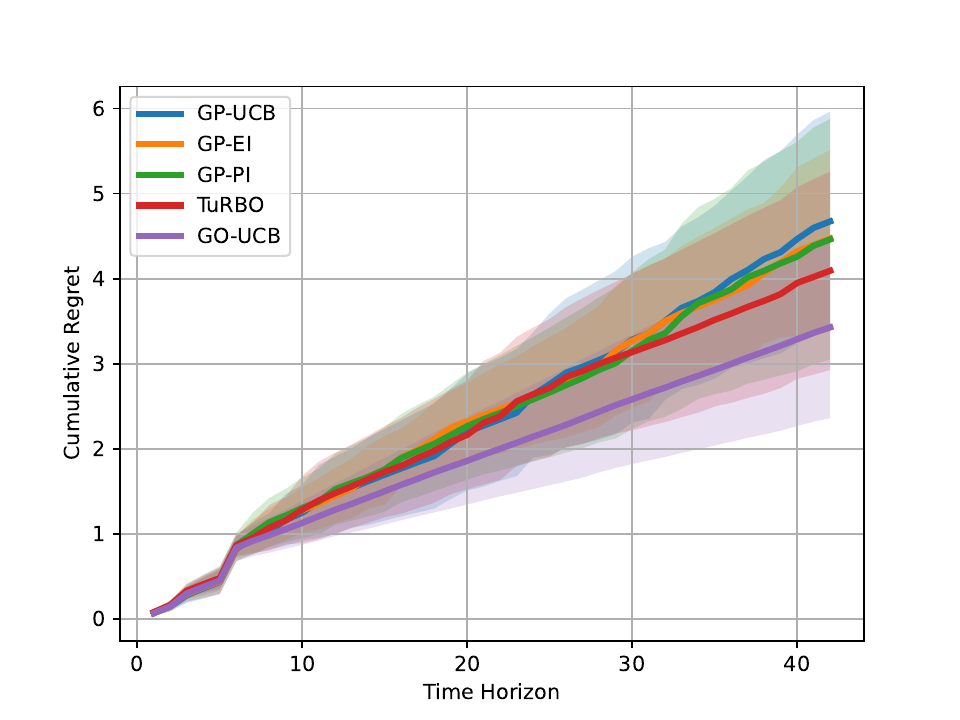}
		(c) Gradient boosting ($d_x = 11$)
	\end{minipage}
\caption{Cumulative regrets (the lower the better) of all algorithms in real-world hyperparameter tuning task on Australian dataset.
}\label{fig:aus}
\end{figure*}

\begin{figure*}[!htbp]
	\centering
	\begin{minipage}{0.32\linewidth}\centering
		\includegraphics[width=\textwidth]{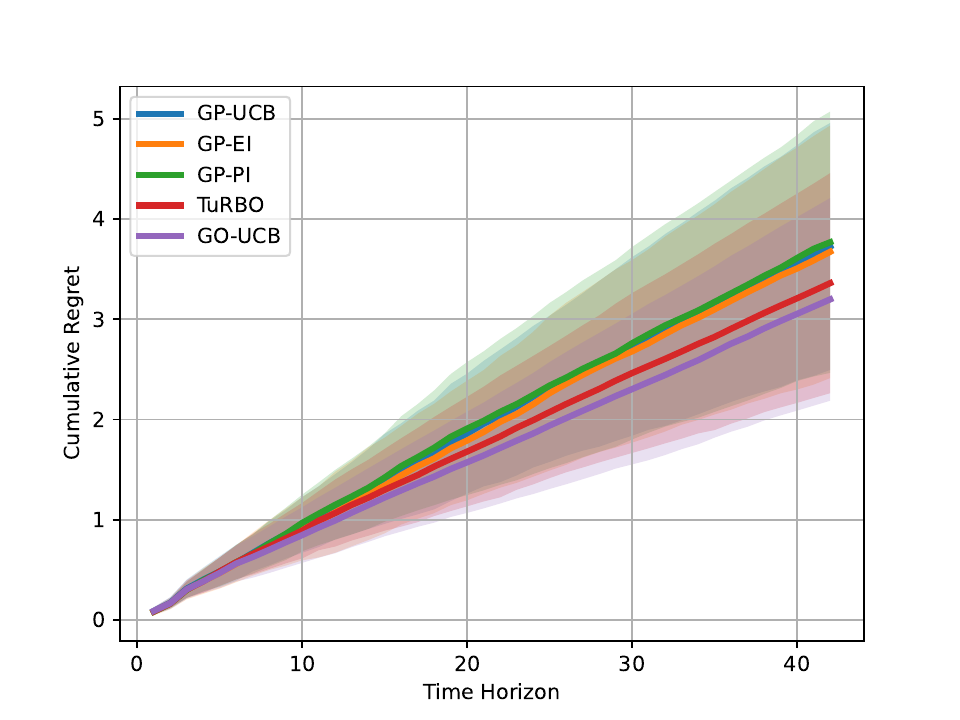}
		(a) Random forest ($d_x=7$)
	\end{minipage}
	\begin{minipage}{0.32\linewidth}\centering
		\includegraphics[width=\textwidth]{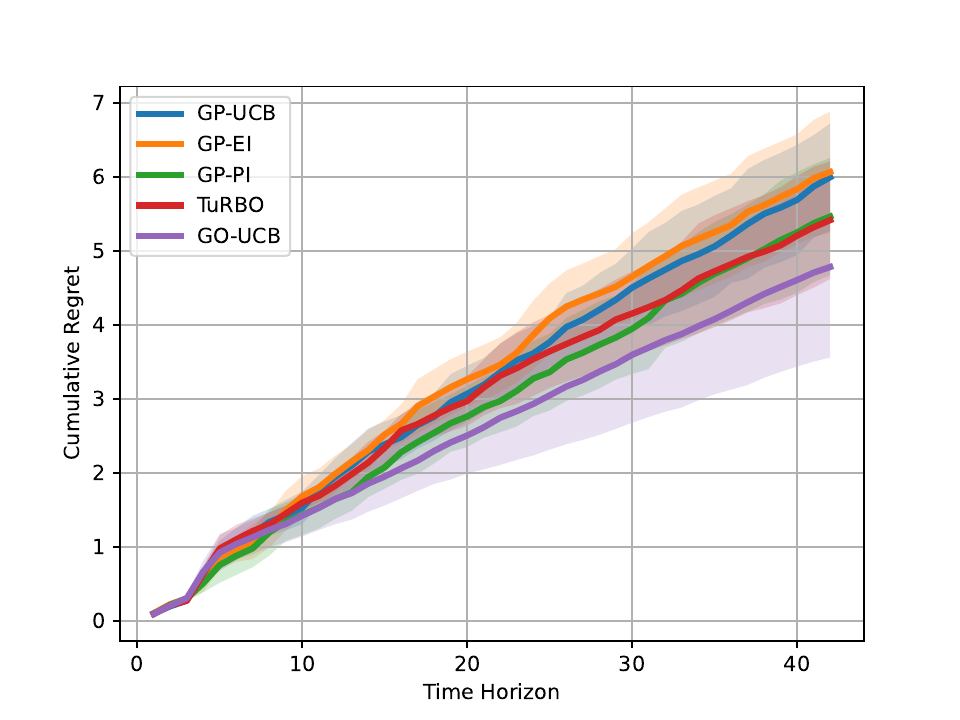}
		(b) Multi-layer perceptron ($d_x=8$)
	\end{minipage}
	\begin{minipage}{0.32\linewidth}\centering
		\includegraphics[width=\textwidth]{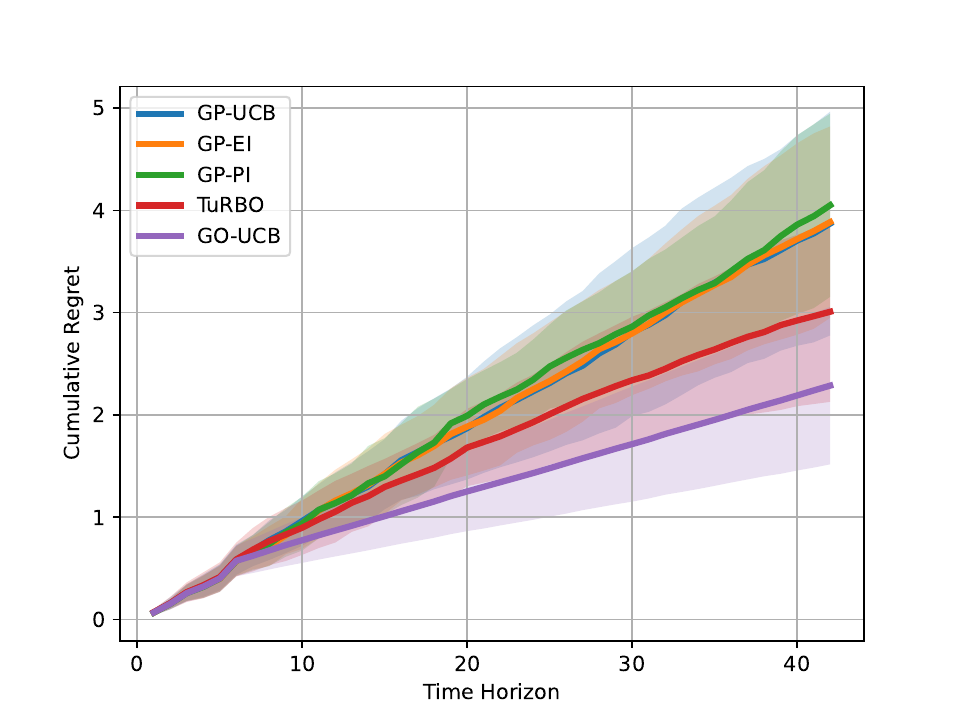}
		(c) Gradient boosting ($d_x=11$)
	\end{minipage}
\caption{Cumulative regrets (the lower the better) of all algorithms in real-world hyperparameter tuning task on Diabetes dataset.
}\label{fig:dia}
\end{figure*}

\textbf{Results on Australian and Diabetes datasets.} Due to page limit of the main paper, we show experimental results of hyperparameter tuning tasks on Australian and Diabetes datasets in Figure \ref{fig:aus} and Figure \ref{fig:dia}. Our proposed GO-UCB algorithm performs consistently better than all other algorithms, which is the same as on Breast-cancer dataset in main paper.

\end{document}